\documentclass{article} 
\usepackage{iclr2027_conference,times}
\iclrfinalcopy

\usepackage{amsmath,amsfonts,bm}

\def\eqref#1{equation~\ref{#1}}

\def\1{\bm{1}}

\DeclareMathAlphabet{\mathsfit}{\encodingdefault}{\sfdefault}{m}{sl}
\SetMathAlphabet{\mathsfit}{bold}{\encodingdefault}{\sfdefault}{bx}{n}

\usepackage{url}

\usepackage{booktabs}        
\usepackage{wrapfig}
\usepackage{graphicx}
\usepackage{enumitem}
\usepackage{multirow}

\usepackage{amsthm}
\usepackage{amsmath,amssymb}
\usepackage{xcolor}
\usepackage{algorithm}
\usepackage{algpseudocode}
\usepackage{bbding} 
\usepackage{colortbl}
\usepackage{makecell}
  
\definecolor{darkred}{HTML}{b92622}
\definecolor{midnightblue}{HTML}{005c7f}
\definecolor{limegreen}{HTML}{97c65a}
\definecolor{salmon}{HTML}{f1958d}
\definecolor{darkcyan}{HTML}{008B8B}
\definecolor{darkgrey}{rgb}{0.53,0.53,0.53}
\definecolor{mygrey}{rgb}{0.9,0.9,0.9}
\definecolor{cvprblue}{rgb}{0.21,0.49,0.74}
\definecolor{hblue}{rgb}{0.0, 0.0, 1}

\DeclareRobustCommand{\algblue}[1]{\begingroup\setlength{\fboxsep}{0.8pt}\colorbox{cvprblue!18}{\strut #1}\endgroup}
\DeclareRobustCommand{\algred}[1]{\begingroup\setlength{\fboxsep}{0.8pt}\colorbox{orange!8}{\strut #1}\endgroup}
\DeclareRobustCommand{\algbluemath}[1]{\text{\begingroup\setlength{\fboxsep}{0.8pt}\colorbox{cvprblue!18}{$\displaystyle #1$}\endgroup}}
\DeclareRobustCommand{\algredmath}[1]{\text{\begingroup\setlength{\fboxsep}{0.8pt}\colorbox{orange!8}{$\displaystyle #1$}\endgroup}}
\usepackage[most]{tcolorbox}

\newtheorem{proposition}{Proposition}
\usepackage[
    colorlinks=true,
    linkcolor=darkred,
    citecolor=cvprblue,
]{hyperref}

\usepackage[capitalize,noabbrev]{cleveref}
  
\title{Looped Transformers as Optimizers}

\author{Yulong Huang\thanks{Equal contribution. $\textrm{\Envelope}$ Corresponding authors.
Email: \texttt{yhuang496@connect.hkust-gz.edu.cn, leo02@stepfun.com, robert.zhang@stepfun.com, bocheng@hkust-gz.edu.cn}.
}%
$^{~~ 1}$, 
Chen Jiang$^{*~2}$, 
Zhanpeng Zhou$^{3}$,
Hongtao Zhang$^{4}$,
Tianyu Li$^{5}$,\\
\textbf{Tianyu He}$^{2}$,
\textbf{Xiangyu Zhang}$^{\textrm{\scriptsize{\Envelope}}~2}$,
\textbf{Bojun Cheng}$~^{\textrm{\scriptsize{\Envelope}}~1}$
\\
$^1$HKUST (GZ) ~ $^2$StepFun ~ $^3$SJTU ~ $^4$UCAS ~ $^5$THU
}

\begin{document}

\maketitle
\begin{abstract}
Looped Transformers provide a parameter-efficient approach to depth scaling by repeatedly applying shared Transformer blocks. Recent reasoning models have likewise highlighted the value of scaling test-time computation through longer computation trajectories. However, the principles for designing effective loop transitions remain poorly understood. We view the looped hidden state as a fast weight\footnote{Our use of the term \emph{fast weight} is inspired by~\citet{schlag2021transformers}'s work, which interprets the recurrent state as dynamically updated weights that evolve across the sequence. The title of this work pays homage to ~\cite{li2018optimization,schlag2021transformers,dao2024transformers}.} that is updated throughout the depth. We formulate loop transitions as local gradient-based updates, with recurrent blocks predicting implicit targets at each depth. Our framework derives loop transitions in closed form from a projection, a local objective and an optimizer update rule. Mapping representative loop transitions into this framework reveals mismatches between their transitions and projections. We first align the input maps of existing transitions. We then derive OperLoop, which combines explicit weight decay, adaptive step size and a delta objective. The aligned variants reduce training loss and improve average commonsense accuracy. OperLoop improves average generative performance over the compared looped and non-looped baselines under matched training FLOPs. These results support the framework's usefulness for loop design. We extend the analysis to additional loop models and outline a roadmap for future loop transition design.

\end{abstract}

\vspace{-0.4cm}
\section{Introduction}
\vspace{-0.2cm}

The scaling of modern large language models (LLMs) follows an empirical principle~\citep{sutton2019bitter}: jointly increasing model size, training data and computation generally improves performance~\citep{kaplan2020scaling}. In conventional Transformers, increasing depth by stacking independently parameterized blocks also increases the number of unique parameters~\citep{vaswani2017attention}. Greater depth can improve performance on challenging reasoning tasks~\citep{rodkin2026beyond}, but additional parameters can yield diminishing returns under data constraints~\citep{muennighoff2023scaling}. Looped Transformers address this tension by decoupling depth from parameter scaling. They repeatedly reuse shared blocks without proportionally increasing the unique parameters~\citep{dehghani2019universal,lan2020albert}. The repeated computation increases the effective depth, providing additional latent refinement that can enhance the model’s reasoning ability~\citep{saunshi2025reasoning}.

Looped Transformer designs vary along two complementary dimensions: \emph{where to loop} and \emph{how to loop}.
For \emph{where to loop}, recurrence varies in both depth and feedback location. It may span the entire model~\citep{zhu2025ouro}, middle blocks~\citep{geiping2025scaling}, or individual layers~\citep{gao2026loop}, and feedback is drawn from different token or layer locations~\citep{mozer2026recirculation,wang2026full}.
For \emph{how to loop}, methods differ in their recurrent-state update mechanisms, using state decay and source injection for stability~\citep{prairie2026parcae}, or an expanded state with parallel streams~\citep{xie2025mhc} to improve the parameter compute trade-off~\citep{zeitoun2026hyperloop}.


Despite these diverse designs, existing \emph{where} and \emph{how} designs are often coupled to specific architectures. Regardless of loop placement, each iteration must transform the state produced at one step into the
state used at the next. We refer to this transformation as the \emph{recurrent transition}. The transition determines how the state is read, retained and updated, governing representational refinement~\citep{hegazy2026recurrentgpt} and dynamical stability~\citep{prairie2026parcae}.
However, existing transition mechanisms remain closely coupled to their underlying architectures, making it difficult to compare them or derive general principles for improving recurrent state evolution.


In this work, we propose a unified optimization framework for analyzing and designing loop transitions (Figure~\ref{fig:main-overview}). The framework treats the recurrent state as a \emph{fast weight} updated across the depth and connects three components: projection, objective, and optimizer update rule. The projection reads out the fast weight, while shared recurrent blocks implicitly predict the target. The objective relates the projection to a target. A loop transition is therefore a fast weight update governed by the optimizer rule and the objective's gradient. Mapping several loop models into this framework reveals mismatches between their transitions and projections. Aligning these maps improves performance, supporting the framework's design constraints. We then derive OperLoop using this optimization framework. Under matched training FLOPs, OperLoop improves average generative performance over the evaluated looped and non-looped baselines. Together, these results suggest that the framework provides useful guidance for improving existing transitions and designing new ones. Finally, we analyze additional loop models and outline a roadmap for future transition design. Our contributions are:
\begin{itemize}
[leftmargin=2.em]

\item \textbf{Unified Framework.} We develop a unified optimization framework that treats recurrent states as fast weights updated across the depth. It derives the loop transitions in closed form from a projection, a local objective, and an optimizer update rule.

\item \textbf{Input-Map Alignment.} We use the framework to identify input-map mismatches and construct aligned variants of existing looped models. Their improved performance provides empirical support for the framework's design constraints.

\item \textbf{Optimizer-Guided Loop Design.} We derive OperLoop from the framework using a delta objective and adaptive step-size control. Matched compute budget evaluations and component ablations further support the framework's ability to guide effective loop-transition design.

\item \textbf{Extended Analysis and Roadmap.} We apply the framework to additional loop models, identifying shared objective forms and recurring input-map mismatches. This analysis informs a roadmap within the framework to guide future Looped Transformers architecture design.
\end{itemize}

\begin{figure*}[t]
\vspace{-0.6cm}
    \centering
    \includegraphics[trim=0 55 0 45,clip,width=\textwidth]{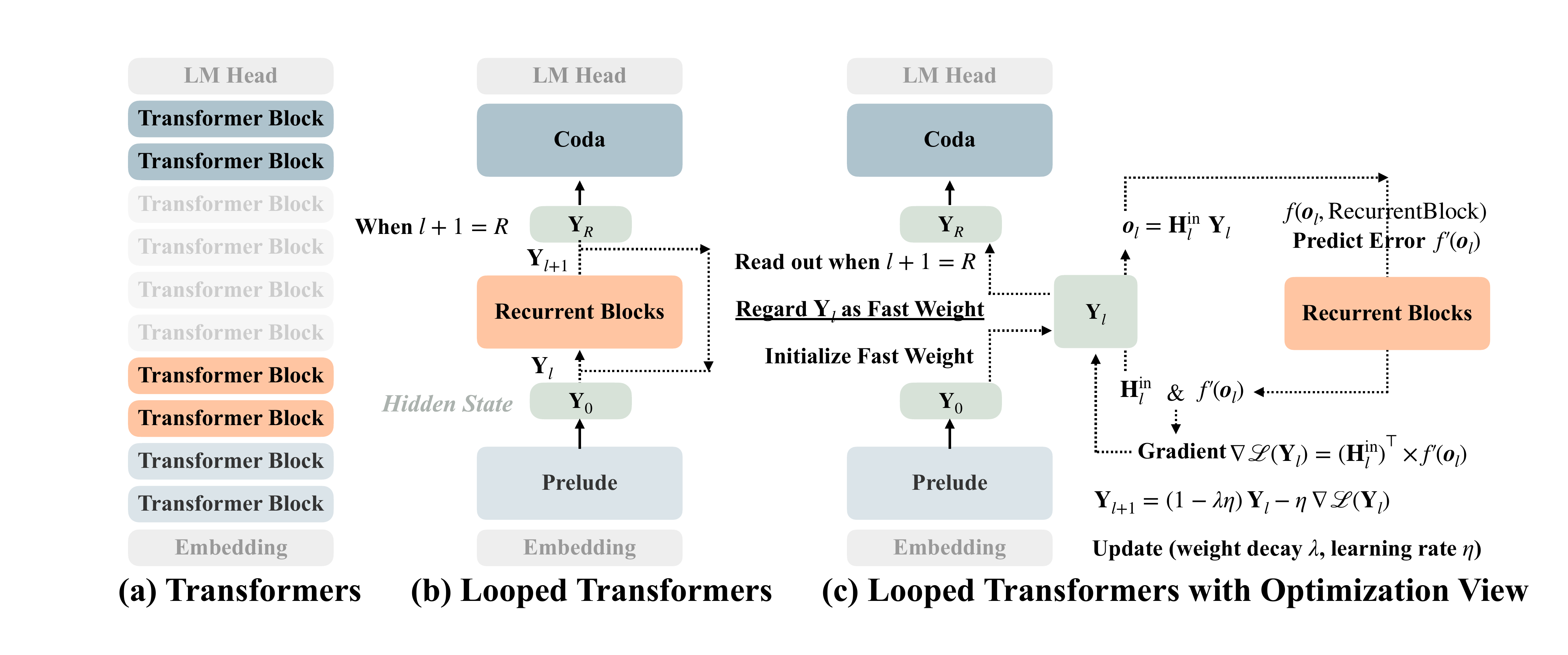}
\caption{Optimization view of Looped Transformers. (a) A standard Transformer uses independently parameterized blocks. (b) A middle-looped Transformer repeatedly applies shared blocks; the Prelude and Coda may be identity maps in a fully looped model. (c) The recurrent state is treated as a fast weight. An input map $\mathbf{H}_l^{\rm in}$ reads out the state, and the shared blocks predict an implicit target. The local objective and optimizer update rule determine the state update. Setting $\mathbf{H}_{\rm in}=\mathbf{I}$, $\lambda=\eta=1$ and $f^{\prime}(\mathbf{o}_l)=-\operatorname{RecurrentBlocks}(\boldsymbol{o}_l)$ recovers the vanilla loop in (b).
}
    \label{fig:main-overview}
\vspace{-0.3cm}
\end{figure*}

\vspace{-0.3cm}
\section{Preliminaries}
\vspace{-0.2cm}
\paragraph{Notation.} Let $\mathbf{X}=(\boldsymbol{x}_1,\ldots,\boldsymbol{x}_N)$ be a sequence of $N$ tokens and $\mathbf{H}\in\mathbb{R}^{N\times d}$ its hidden states, where $d$ is the model width. The symbols $k$ and $l$ denote the depth and loop indices, respectively. We use $L$, $K$, and $R$ for the depth of a conventional Transformer, the number of distinct blocks in the shared recurrent blocks, and the number of recurrent steps, respectively. Unless stated otherwise, bold uppercase and lowercase letters denote matrices and vectors, respectively. We use function composition notation, writing $\mathcal{F}_3(\mathcal{F}_2(\mathcal{F}_1(x)))$ compactly as $\mathcal{F}_3\circ\mathcal{F}_2\circ\mathcal{F}_1(x)$.
\newpage
\paragraph{Standard Transformers.} A standard Transformer increases its effective depth by stacking independently parameterized blocks~\citep{vaswani2017attention}. Given a hidden state $\mathbf{H}^{(k)}$, a pre-norm Transformer block applies an attention sublayer followed by an FFN sublayer:
\begin{equation}
\mathbf{H}^{(k+1)}=\widetilde{\mathbf{H}}^{(k)}+\operatorname{FFN}_{\theta^{\mathrm{FFN}}_k}\!\left(\operatorname{LN}(\widetilde{\mathbf{H}}^{(k)})\right), \qquad \widetilde{\mathbf{H}}^{(k)}=\mathbf{H}^{(k)}+\operatorname{Attn}_{\theta^{\mathrm{Attn}}_k}\!\left(\operatorname{LN}(\mathbf{H}^{(k)})\right).
\end{equation}
For depth index $k=0,\ldots,L-1$, let $\theta_k := (\theta^{\mathrm{Attn}}_k,\theta^{\mathrm{FFN}}_k)$ and $\mathbf{H}^{(k+1)} := \mathcal{F}_{\theta_k}(\mathbf{H}^{(k)})$ for simplicity. Given $\mathbf{H}^{(0)}=\operatorname{Embedding}(\mathbf{X})$, stacking these blocks yields the final hidden state in Eq.~\ref{eq:prelim-dense-composition}:
\begin{equation}
\mathbf{H}^{(L)}=\mathcal{F}_{\theta_{L-1}}\circ\mathcal{F}_{\theta_{L-2}}\circ\cdots\circ\mathcal{F}_{\theta_1}\circ\mathcal{F}_{\theta_0}(\mathbf{H}^{(0)}).
\label{eq:prelim-dense-composition}
\end{equation}

\paragraph{Looped Transformers.}
Looped Transformers increase effective depth by reusing the same blocks without introducing additional parameters~\citep{dehghani2019universal}. Let
$\operatorname{Blocks}_{\theta}
:=\mathcal{F}_{\theta_{i+K-1}}\circ\cdots\circ\mathcal{F}_{\theta_{i+1}}\circ\mathcal{F}_{\theta_{i+0}}$, $\operatorname{Prelude}_\varphi:=\mathcal{F}_{\theta_{i-1}}\circ\cdots\circ\mathcal{F}_{\theta_0}$ and $\operatorname{Coda}_{\phi}:=\mathcal{F}_{\theta_{L-1}}\circ\cdots\circ\mathcal{F}_{\theta_{L-j}}$, where $i$ and $j$ denote the numbers of $\operatorname{Prelude}$ and $\operatorname{Coda}$ blocks, respectively. The middle-looped architecture~\citep{geiping2025scaling} yields the final hidden state in Eq.~\ref{eq:prelim-loop-composition}:
\begin{equation}
    \mathbf{H}^{(L)}
    =\operatorname{Coda}_{\phi}\circ
    \operatorname{Blocks}_{\theta}\circ\cdots\circ\operatorname{Blocks}_{\theta}
    \circ
    \operatorname{Prelude}_\varphi(\mathbf{H}^{(0)}).
    \label{eq:prelim-loop-composition}
\end{equation}
The shared $\operatorname{Blocks}_{\theta}$ is iteratively applied $R$ times to update the latent hidden state. The $\operatorname{Prelude}$ embeds the input representation into the latent space, while the $\operatorname{Coda}$ decodes the final latent state into the output~\citep{geiping2025scaling}. For a fully-looped model~\citep{zhu2025ouro}, both the $\operatorname{Prelude}$ and $\operatorname{Coda}$ degenerate to the identity. In this work, we focus on the middle-loop architecture, which retains separate prelude and coda modules. Additional related work is discussed in Appendix~\ref{apx.extended.related.work}.

\section{A Unified Framework for Looped Transformers}
\label{sec:unified-framework}

This section develops a unified framework for Looped Transformers by focusing on the recurrent state transitions induced by shared recurrent blocks. We first characterize the transition structure of representative architectures, then formulate loop transitions from an optimization perspective, and finally map these transitions into the resulting framework, as summarized in Table~\ref{tab:loop-optimizer-view}.

\subsection{A Transition View of Looped Transformers}
\label{sec:transition-centric-view}


\paragraph{Vanilla State Transition.} The vanilla loop updates the recurrent state through a simple transition. Let $\boldsymbol{y}_l\in \mathbb{R}^d$ denote the recurrent state and $\boldsymbol{b}\in \mathbb{R}^d$ an optional bias. Its transition is given by Eq.~\ref{eq:prelim-loop-recurrence}. Given the prelude output $\boldsymbol{e}=\operatorname{Prelude}_{\phi}(\mathbf{H}^{(0)})\in \mathbb{R}^d$, the initial state is either $\boldsymbol{y}_0=\boldsymbol{e}$ or $\boldsymbol{y}_0\sim \mathcal{N}(\mu,\sigma)$. Under this abstraction, Ouro~\citep{zhu2025ouro} omits bias ($\boldsymbol{b}=\mathbf{0}$), whereas Huginn~\citep{geiping2025scaling} uses injection ($\boldsymbol{b}=\boldsymbol{e}$). The final state is read out as $\mathbf{H}^{(L)}=\operatorname{Coda}_{\phi}(\boldsymbol{y}_R)$.
\begin{equation}
    \boldsymbol{y}_{l+1}=\operatorname{Blocks}_{\theta}(\boldsymbol{y}_{l}+\boldsymbol{b}).
    \label{eq:prelim-loop-recurrence}
\end{equation}
\paragraph{Structured State Transition.} Parcae~\citep{prairie2026parcae} augments the vanilla transition with a structured state space model inspired by~\citet{gu2021efficiently}. Its transition\footnote{The original paper writes the recurrent update as $\boldsymbol{y}_{l+1}=\bar{\mathbf{A}}\boldsymbol{y}_l+\bar{\mathbf{B}}\boldsymbol{e}+\bar{\mathcal{R}}(\boldsymbol{y}_l,\boldsymbol{e})$, which represents the computation inside the shared blocks. We abstract the shared blocks as $\operatorname{Blocks}_{\theta}$. Here $\Delta,\mathbf A >0$, $\bar{\mathbf C}\in\mathbb{R}^{d\times d}$.} is given by Eq.~\ref{eq:prelim-parcae-map}. For learnable parameters $\Delta$, $\mathbf{A}\in\mathbb{R}^{d}$ and $\mathbf{B}\in\mathbb{R}^{d\times d}$, the diagonal operator $\bar{\mathbf{A}} =\operatorname{Diag}(\exp(- \Delta\mathbf{A}))$ decays the state $\boldsymbol{y}_l\in \mathbb{R}^d$ and promotes $\rho(\bar{\mathbf{A}})<1$ for stable dynamics.
Following~\citet{huang2026exact}, the input gain is $\bar{\mathbf{B}}=\operatorname{Diag}(\frac{1-\exp(-\Delta\mathbf{A})}{\mathbf{A}})\mathbf{B}$. The final state is read out as $\mathbf{H}^{(L)}=\operatorname{Coda}_{\phi}(\bar{\mathbf{C}}\ \boldsymbol{y}_R)$.
\begin{equation}
    \boldsymbol{y}_{l+1}=\operatorname{Blocks}_{\theta}(\bar{\mathbf{A}}\boldsymbol{y}_{l}+\bar{\mathbf{B}}\boldsymbol{e}).
    \label{eq:prelim-parcae-map}
\end{equation}
\paragraph{Expanded State Transition.} HyperLoop~\citep{zeitoun2026hyperloop} expands the recurrent state to $r$ streams as $\mathbf{Y}_{l+1}\in \mathbb{R}^{r\times d}$. Its transition is given by Eq.~\ref{eq:prelim-hyperloop-recurrence}. The state-dependent transformations $\mathbf{H}^{\mathrm{pre}}_l,\mathbf{H}^{\mathrm{post}}_l\in\mathbb{R}^{1\times r}$ map between the streams and the $d$-dimensional block input, while the matrix $\mathbf{H}^{\mathrm{res}}_l\in\mathbb{R}^{r\times r}$ mixes the carried state. The final state is read out as $\mathbf{H}^{(L)}=\operatorname{Coda}_{\phi}(\operatorname{Mean}_r(\mathbf{Y}_R))$. Here $\boldsymbol{b}_l\in\mathbb{R}^{d}$ is an optional independent bias, which we omit for notational simplicity.
\begin{equation}
    \mathbf{Y}_{l+1}
    =
    \mathbf{H}^{\mathrm{res}}_l\mathbf{Y}_l
    +
    (\mathbf{H}^{\mathrm{post}}_l)^{\top}
    \left( \operatorname{Blocks}_{\theta} \left(
        \mathbf{H}^{\mathrm{pre}}_l\mathbf{Y}_l
      \right)
      +\mathbf{b}_l \right).
    \label{eq:prelim-hyperloop-recurrence}
\end{equation}

Despite their architectural differences, these models share a common pattern: they read the recurrent state, transform it using the shared block $\operatorname{Blocks}_{\theta}$ and write the result back as the next state.

\begin{table}[t!]
\caption{Comparison of loop transitions under the optimizer update rule in Eq.~\ref{eq:sgd-optimizer}. The lower rows identify the decay coefficient ($\lambda$), step size ($\eta$), input map ($\boldsymbol{x}$), output error ($f^{\prime}$), and recurrent state ($\mathbf{W}$). Red marks mismatches between the update's implied input map and the projection's map. Aligning the transition with the input map used in the projection improves performance (§~\ref{sec.align.input.map}).}
\label{tab:loop-optimizer-view}
\centering
\resizebox{\linewidth}{!}{%
\begin{tabular}{c|cccc}
\toprule
\textbf{Models}
& \begin{tabular}[c]{c}
  \textbf{Ouro}\\
  \citep{zhu2025ouro}\\
  \end{tabular}
  & \begin{tabular}[c]{c}
  \textbf{Huginn}\\
  \citep{geiping2025scaling}
  \end{tabular}
& \begin{tabular}[c]{c}
  \textbf{Parcae}\\
  \citep{prairie2026parcae}
  \end{tabular}
& \begin{tabular}[c]{c}
  \textbf{HyperLoop}\\
  \citep{zeitoun2026hyperloop}
  \end{tabular}\\
\midrule
\multicolumn{4}{l}{\textit{Original loop transitions}}\\
\midrule
\textbf{Loop}
& $\begin{aligned}
  \boldsymbol{y}_{l+1}=\operatorname{Blocks}_{\theta}(\boldsymbol{o}_l)
  \end{aligned}$
  & $\begin{aligned}
  \boldsymbol{y}_{l+1}=\operatorname{Blocks}_{\theta}(\boldsymbol{o}_l)
  \end{aligned}$
& $\begin{aligned}
  &\boldsymbol{y}_{l+1}=\operatorname{Blocks}_{\theta}(\boldsymbol{o}_l)
  \end{aligned}$
& $\begin{aligned}
  &\mathbf{Y}_{l+1}=\mathbf{H}^{\mathrm{res}}_l\mathbf{Y}_l+\\
  &(\mathbf{H}^{\mathrm{post}}_l)^{\top}\operatorname{Blocks}_{\theta}(\boldsymbol{o}_l)
  \end{aligned}$\\
Projection
&
$\boldsymbol{o}_l=\boldsymbol{y}_l$ 
&
$\boldsymbol{o}_l=\boldsymbol{y}_l+\mathbf{e} $
&
$\boldsymbol{o}_l=\bar{\mathbf{A}}\boldsymbol{y}_l+\bar{\mathbf{B}}\mathbf{e}$
&
$\boldsymbol{o}_l=\mathbf{H}^{\mathrm{pre}}_l\mathbf{Y}_l$
\\
\midrule
\multicolumn{4}{l}{\textit{Optimization mapping to  } $\mathbf{W}_{i+1}=(1-\lambda \eta)\ \mathbf{W}_{i} - \eta \  (\boldsymbol{x}^{\top}f^{\prime})$}\\
\midrule
$\lambda$
& $1$
& $1$
& $1$
& $\mathbf{I}-\mathbf{H}^{\mathrm{res}}_l$\\
$\eta$
& $1$
& $1$
& $1$
& $1$\\
$\boldsymbol{x}$
& $\mathbf{I}$
& $\mathbf{I}$
& ${\color{darkred}\mathbf{I} (\text{Mismatch with }\bar{\mathbf{A}})}$
& ${\color{darkred}\mathbf{H}^{\mathrm{post}}_l (\text{Mismatch with } \mathbf{H}^{\mathrm{pre}}_l)}$\\
$f^{\prime}(\boldsymbol{o}_l)$
& $-\operatorname{Blocks}_{\theta}(\boldsymbol{o}_l)$
& $-\operatorname{Blocks}_{\theta}(\boldsymbol{o}_l)$
& $-\operatorname{Blocks}_{\theta}(\boldsymbol{o}_l)$
& $-\operatorname{Blocks}_{\theta}(\boldsymbol{o}_l)$\\
$\mathbf{W}$
& $\boldsymbol{y}_l\in\mathbb{R}^{d}$
& $\boldsymbol{y}_l\in\mathbb{R}^{d}$
& $\boldsymbol{y}_l\in\mathbb{R}^{d}$
& $\mathbf{Y}_l\in\mathbb{R}^{r\times d}$\\
\bottomrule
\end{tabular}
}
\end{table}

\subsection{An Optimization View of Looped Transformers}
\label{sec:unified-optimizer-view}
To build a connection between the looped transition and optimization, we interpret the recurrent state as a fast weight updated across the depth. Then we further express loop transitions in an optimization form. This formulation reveals the implicit target predicted by the recurrent blocks.

We first consider a toy optimization example. Let
$\boldsymbol{x}\in \mathbb{R}^{1 \times d_1}$, $\boldsymbol{b} \in \mathbb{R}^{1 \times d_2}$ and $\mathbf{W}\in \mathbb{R}^{d_1\times d_2}$ be an input, bias and weights, respectively. The output is $\boldsymbol{o}=\boldsymbol{x}\mathbf{W}+\boldsymbol{b} \, \in \mathbb{R}^{1\times d_2}$. For the local objective $ \mathcal{L}:=f(\boldsymbol{o})$, the output error is $\frac{\partial \mathcal{L}}{\partial \boldsymbol{o}} := f^{\prime}(\boldsymbol{o}) \, \in \mathbb{R}^{1\times d_2}$. Then the gradient with respect to the weight is $\frac{\partial \mathcal{L}}{ \partial \mathbf{W}} =\boldsymbol{x}^{\top}f^{\prime}(\boldsymbol{o}) \,  \in \mathbb{R}^{d_1 \times d_2}$. For step size $\eta$ and weight decay $\lambda$, the optimization update is:
\begin{equation}\label{eq:sgd-optimizer}
    \mathbf{W}_{i+1} = \mathbf{W}_{i} - \eta \,(\nabla_{\mathbf{W}}\mathcal{L} + \lambda \,\mathbf{W}_{i} )=
    (1-\eta\lambda)\,\mathbf{W}_{i}
    -\eta\, (\boldsymbol{x}^{\top}f^{\prime}(\boldsymbol{o})) .
\end{equation}
\textit{To relate loop transitions to optimization, we interpret the recurrent state $\mathbf{Y}_l\in\mathbb{R}^{d_1\times d_2}$ as a fast weight playing the role of $\mathbf{W}$ in Eq.~\ref{eq:sgd-optimizer}.} The dimensions $d_1,d_2$ denote its row and column dimensions, which are specified for each architecture. Proposition~\ref{proposition} gives the resulting optimizer-form transition.

\begin{proposition}[Optimization interpretation of loop transitions]\label{proposition}
Let $\mathbf{Y}_l\in\mathbb{R}^{d_1\times d_2}$ denote the fast weight updated across recurrent depth $l$. Let $\mathbf{H}^{\mathrm{in}}_l\in\mathbb{R}^{m\times d_1}$ be an adaptive input map and $\boldsymbol{b}_l\in\mathbb{R}^{m\times d_2}$ a bias. For a projection $\boldsymbol{o}_l=\mathbf{H}^{\mathrm{in}}_l\mathbf{Y}_l+\boldsymbol{b}_l$ with the local objective $\mathcal{L}:=f(\boldsymbol{o}_l)$, the loop transition can be written in the gradient-based optimization form shown in Eq.~\ref{eq:unified-looped-transition}:
\begin{equation}\label{eq:unified-looped-transition}
    \mathbf{Y}_{l+1}
=\left(\mathbf{I}_{d_1}-\eta_l\mathbf{\Lambda}_l\right)\mathbf{Y}_l
-\eta_l \left(  (\mathbf{H}^{\mathrm{in}}_l)^{\top} f^{\prime}(\boldsymbol{o}_l) \right) ,
\end{equation}
where $\eta_l$ is an adaptive step size and $\mathbf{\Lambda}_l\in\mathbb{R}^{d_1\times d_1}$ is a decay operator representing weight decay.
\end{proposition}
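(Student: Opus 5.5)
The plan is to generalize the toy computation leading to Eq.~\ref{eq:sgd-optimizer} from a single row input $\boldsymbol{x}\in\mathbb{R}^{1\times d_1}$ to a matrix input map $\mathbf{H}^{\mathrm{in}}_l\in\mathbb{R}^{m\times d_1}$, and from a scalar weight decay $\lambda$ to a matrix decay operator $\mathbf{\Lambda}_l$. I will first compute the gradient of $\mathcal{L}=f(\boldsymbol{o}_l)$ with respect to $\mathbf{Y}_l$ by the chain rule through the affine projection $\boldsymbol{o}_l=\mathbf{H}^{\mathrm{in}}_l\mathbf{Y}_l+\boldsymbol{b}_l$. Writing the differential $d\mathcal{L}=\operatorname{tr}\!\big(f'(\boldsymbol{o}_l)^{\top}\,d\boldsymbol{o}_l\big)$ with $d\boldsymbol{o}_l=\mathbf{H}^{\mathrm{in}}_l\,d\mathbf{Y}_l$ (the bias and the input map are treated as constants with respect to the fast weight, exactly as $\boldsymbol{x}$ and $\boldsymbol{b}$ are in the toy example), cyclicity of the trace gives
\begin{equation*}
\nabla_{\mathbf{Y}_l}\mathcal{L}=(\mathbf{H}^{\mathrm{in}}_l)^{\top}f'(\boldsymbol{o}_l)\in\mathbb{R}^{d_1\times d_2}.
\end{equation*}
This recovers $\boldsymbol{x}^{\top}f'(\boldsymbol{o})$ when $m=1$.

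Next I will define the regularized descent step on the augmented local objective $\mathcal{L}+\tfrac12\operatorname{tr}(\mathbf{Y}^{\top}\mathbf{\Lambda}_l\mathbf{Y})$ for a symmetric $\mathbf{\Lambda}_l$. More generally, I will simply posit a decay term $\mathbf{\Lambda}_l\mathbf{Y}_l$ added to the gradient, as in decoupled weight decay. A single step of size $\eta_l$ then gives $\mathbf{Y}_{l+1}=\mathbf{Y}_l-\eta_l\big((\mathbf{H}^{\mathrm{in}}_l)^{\top}f'(\boldsymbol{o}_l)+\mathbf{\Lambda}_l\mathbf{Y}_l\big)$. Collecting the $\mathbf{Y}_l$ terms yields Eq.~\ref{eq:unified-looped-transition}. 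The scalar case $\mathbf{\Lambda}_l=\lambda\mathbf{I}_{d_1}$ reduces to Eq.~\ref{eq:sgd-optimizer}.

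The remaining content of the statement is that loop transitions \emph{can be written} in this form. I will read this as an existence claim: for any transition of the type $\mathbf{Y}_{l+1}=\mathbf{A}_l\mathbf{Y}_l+\mathbf{G}_l$, choose $\eta_l=1$, $\mathbf{\Lambda}_l=\mathbf{I}-\mathbf{A}_l$, and an error $f'$ whose back-projection through $(\mathbf{H}^{\mathrm{in}}_l)^{\top}$ equals $-\mathbf{G}_l$. I will then check this on the transitions of Section~\ref{sec:transition-centric-view}.
\begin{itemize}
\item For the vanilla, Huginn and Parcae updates, take $\mathbf{\Lambda}_l=\mathbf{I}$, $\eta_l=1$, input map $\mathbf{I}$ and $f'(\boldsymbol{o}_l)=-\operatorname{Blocks}_{\theta}(\boldsymbol{o}_l)$. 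This recovers $\boldsymbol{y}_{l+1}=\operatorname{Blocks}_\theta(\boldsymbol{o}_l)$.
\item For HyperLoop, take $\mathbf{\Lambda}_l=\mathbf{I}-\mathbf{H}^{\mathrm{res}}_l$ and input map $\mathbf{H}^{\mathrm{post}}_l$. This gives Eq.~\ref{eq:prelim-hyperloop-recurrence}, consistent with Table~\ref{tab:loop-optimizer-view}.
\end{itemize}

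The main obstacle is the interpretational step of identifying $f'$. The statement only requires that the block output play the role of the negative output error of some objective, that is, the implicit target. For an arbitrary nonlinear $\operatorname{Blocks}_\theta$, the map $\boldsymbol{o}\mapsto-\operatorname{Blocks}_\theta(\boldsymbol{o})$ need not be a true gradient field. I would handle this by treating $f'(\boldsymbol{o}_l)$ as the output error evaluated at the current point. Equivalently, I would use the local quadratic objective $f(\boldsymbol{o})=\tfrac12\|\boldsymbol{o}-\boldsymbol{t}_l\|^2$ with the target $\boldsymbol{t}_l$ frozen at its current value, defined so that $f'(\boldsymbol{o}_l)$ equals the block-derived error. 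Then only the value at $\boldsymbol{o}_l$ matters for a single step, and the identification holds pointwise at each depth. A further caveat is that, when the transition's write-back map differs from the projection's $\mathbf{H}^{\mathrm{in}}_l$ (the Parcae and HyperLoop cases), the form still holds only if $\boldsymbol{x}$ is taken to be the write-back map. I would note this explicitly, since it is exactly the mismatch flagged in red in Table~\ref{tab:loop-optimizer-view}.
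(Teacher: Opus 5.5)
Your first two steps match the paper's proof sketch, which stops there: the trace-differential chain rule through the affine projection gives $\nabla_{\mathbf{Y}_l}\mathcal{L}=(\mathbf{H}^{\mathrm{in}}_l)^{\top}f'(\boldsymbol{o}_l)$, and one step of size $\eta_l$ with the decay term $\mathbf{\Lambda}_l\mathbf{Y}_l$ added, after collecting the $\mathbf{Y}_l$ terms, yields the stated form. Your extra ``existence'' reading is not part of the proposition and does not hold in general, because it needs $-\mathbf{G}_l$ to lie in the range of $(\mathbf{H}^{\mathrm{in}}_l)^{\top}$, which fails for HyperLoop unless $\mathbf{H}^{\mathrm{post}}_l\propto\mathbf{H}^{\mathrm{pre}}_l$; swapping in the write-back map, as you do, gives up the gradient interpretation and is exactly the flagged mismatch rather than an instance of the proposition.
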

\vspace{-0.8em}
\begin{proof}[Proof Sketch]
From the optimizer perspective, the general update in Eq.~\ref{eq:unified-looped-transition} contains an implicit forward computation and a local objective evaluation. In particular, the fast weight projection is computed as $\boldsymbol{o}_l = \mathbf{H}_l^{\mathrm{in}}\mathbf{Y}_l + \boldsymbol{b}_l \,\in \mathbb{R}^{m\times d_2}$, where $\mathbf{H}_l^{\mathrm{in}} \in \mathbb{R}^{m\times d_1}$ is the input map and $\boldsymbol{b}_l\in\mathbb{R}^{m\times d_2}$ is an optional bias. 
To update the fast weight $\mathbf{Y}_l$, we consider gradient-based optimization with decay:
\begin{equation}\label{eq:fast-weight-optimiation}
\mathbf{Y}_{l+1} = \mathbf{Y}_{l}  - \eta_l \left( \nabla_{\mathbf{Y}_l}\mathcal{L} + \mathbf{\Lambda}_l \mathbf{Y}_{l} \right) .
\end{equation}
The partial derivative of $ \mathcal{L}$ is $ \frac{\partial \mathcal{L}}{\partial \boldsymbol{o}_l} =f^{\prime}(\boldsymbol{o}_l)\in \mathbb{R}^{m \times d_2}$.
We then obtain the gradient:\begin{equation}\label{eq:graddients-general-transition}
\nabla_{\mathbf{Y}_l}\mathcal{L} = \frac{\partial \mathcal{L}}{\partial \mathbf{Y}_l} = (\mathbf{H}^{\mathrm{in}}_l)^{\top}  \frac{\partial \mathcal{L}}{\partial  \boldsymbol{o}_l}= (\mathbf{H}^{\mathrm{in}}_l)^{\top} f^{\prime}(\boldsymbol{o}_l) .
\end{equation}
The gradient has shape $d_1\times d_2$, matching that of the fast weight. The matrix product in Eq.~\ref{eq:graddients-general-transition} reduces to an outer product when $m=1$. Substituting Eq.~\ref{eq:graddients-general-transition} into Eq.~\ref{eq:fast-weight-optimiation} yields:
\begin{equation}\label{eq:final-derivation}
\mathbf{Y}_{l+1} = \mathbf{Y}_{l} - \eta_l \left( (\mathbf{H}^{\mathrm{in}}_l)^{\top} f^{\prime}(\boldsymbol{o}_l)+ \mathbf{\Lambda}_l \mathbf{Y}_{l} \right) = \left(\mathbf{I}_{d_1}-\eta_l\mathbf{\Lambda}_l\right)\mathbf{Y}_l
-\eta_l \left(  (\mathbf{H}^{\mathrm{in}}_l)^{\top} f^{\prime}(\boldsymbol{o}_l) \right).
\end{equation}
Finally, Eq.~\ref{eq:final-derivation} recovers the formulation in Eq.~\ref{eq:unified-looped-transition}, as stated in Proposition~\ref{proposition}.
\end{proof}

Let $\boldsymbol{t}_l=\operatorname{Blocks}_{\theta}(\boldsymbol{o}_l)$ be the implicit target predicted by the shared recurrent blocks. For the local algebraic derivation, we hold $\boldsymbol{t}_l$ fixed when differentiating the implicit negative inner product objective $\mathcal{L}_l=-\langle\boldsymbol{o}_l,\boldsymbol{t}_l\rangle$, giving $f^{\prime}(\boldsymbol{o}_l)=\nabla_{\boldsymbol{o}_l}\mathcal{L}_l=-\boldsymbol{t}_l =-\operatorname{Blocks}_{\theta}(\boldsymbol{o}_l)$. The fixed-target convention is used only to derive the closed-form transition algebraically. It does not imply that gradients are stopped during training. Under this hypothesis and Proposition~\ref{proposition}, we can map the transition in §~\ref{sec:existing-transitions-coordinates}.


\vspace{-0.1cm}
\subsection{Mapping Loop Transitions}
\label{sec:existing-transitions-coordinates}
\vspace{-0.1cm}
In this subsection, we map the loop transitions to the optimizer update rule introduced in §~\ref{sec:unified-optimizer-view}. We identify the recurrent state, projection, input map, decay operator and step size for Vanilla Loop, Parcae and HyperLoop. This analysis reveals input-map mismatches in Parcae and HyperLoop. The resulting mappings are summarized in Table~\ref{tab:loop-optimizer-view}.

\vspace{-0.2cm}
\paragraph{Vanilla Loop.}
Let $\mathbf{Y}_l=\boldsymbol{y}_l\in\mathbb{R}^{d\times1}$ and $\boldsymbol{b}\in\mathbb{R}^{d\times1}$. The projection $\boldsymbol{o}_l=\boldsymbol{y}_l+\boldsymbol{b}$ uses $\mathbf{H}^{\mathrm{in}}_l=\mathbf{I}_d$ and $\boldsymbol{b}_l=\boldsymbol{b}$. Setting $\eta_l=1$ and $\mathbf{\Lambda}_l=\mathbf{I}_d$ in Eq.~\ref{eq:unified-looped-transition} gives $\boldsymbol{y}_{l+1}=-f^{\prime}(\boldsymbol{o}_l)=\operatorname{Blocks}_{\theta}(\boldsymbol{o}_l)=\operatorname{Blocks}_{\theta}(\boldsymbol{y}_l+\boldsymbol{b})$, which recovers Eq.~\ref{eq:prelim-loop-recurrence}. Vanilla Loop is a special case of the optimizer update rule.

\vspace{-0.2cm}
\paragraph{Parcae.}
For the structured state with $\boldsymbol{e}\in\mathbb{R}^{d\times1}$, the projection $\boldsymbol{o}_l=\bar{\mathbf{A}}\boldsymbol{y}_l+\bar{\mathbf{B}}\boldsymbol{e}$ uses $\mathbf{H}^{\mathrm{in}}_l=\bar{\mathbf{A}}\in\mathbb{R}^{d\times d}$ and $\boldsymbol{b}_l=\bar{\mathbf{B}}\boldsymbol{e}\in\mathbb{R}^{d\times1}$. Setting $\eta_l=1$ and $\mathbf{\Lambda}_l=\mathbf{I}_d$ in Eq.~\ref{eq:unified-looped-transition} yields $\boldsymbol{y}_{l+1}= -{\bar{\mathbf{A}}}^{\top} f^{\prime}(\boldsymbol{o}_l)={\bar{\mathbf{A}}}^{\top}\operatorname{Blocks}_{\theta}(\boldsymbol{o}_l)$. Compared with Eq.~\ref{eq:prelim-parcae-map}, this update exposes the missing input-map factor.

\vspace{-0.2cm}
\paragraph{HyperLoop.}
For the expanded state $\mathbf{Y}_l\in\mathbb{R}^{r\times d}$, the projection $\boldsymbol{o}_l=\mathbf{H}_l^{\mathrm{pre}}\mathbf{Y}_l\in\mathbb{R}^{1\times d}$ uses $\mathbf{H}^{\mathrm{in}}_l=\mathbf{H}_l^{\mathrm{pre}}\in\mathbb{R}^{1\times r}$ without bias. Setting $\eta_l=1$ and $\mathbf{\Lambda}_l=\mathbf{I}_r-\mathbf{H}^{\rm res}_l$ in Eq.~\ref{eq:unified-looped-transition} gives $\mathbf{Y}_{l+1} =  \mathbf{H}^{\rm res}_l \mathbf{Y}_{l} +{\mathbf{H}^{\rm pre}_l}^{\top} \operatorname{Blocks}_{\theta}(\mathbf{H}^{\rm pre}_l \mathbf{Y}_{l})$. Compared with Eq.~\ref{eq:prelim-hyperloop-recurrence}, this update requires input-map alignment.

After mapping, Parcae and HyperLoop exhibit input-map mismatches under the optimization view. The gradient form in Eq.~\ref{eq:graddients-general-transition} uses the matrix product of the transposed input map and the output error: Parcae omits $\bar{\mathbf A}^{\top}$, while HyperLoop uses separate $\mathbf H_l^{\mathrm{pre}}$ and $\mathbf H_l^{\mathrm{post}}$. After input-map alignment, both Parcae and HyperLoop show improved performance as shown in §~\ref{sec.align.input.map}, providing further support for the proposed framework. These findings motivate the optimizer-guided loop design in §~\ref{sec:optimizer-guided-loop-design}.

\vspace{-0.1cm}
\section{Optimizer-Guided Loop Design} \label{sec:optimizer-guided-loop-design}
\vspace{-0.1cm}
To examine whether the framework can guide loop-transition design, we derive OperLoop using HyperLoop's expanded state and a delta objective. Let
$\mathbf{Y}_l\in\mathbb{R}^{r\times d}$ be the expanded fast weight and $\mathbf{H}^{\mathrm{in}}_l\in\mathbb{R}^{1\times r}$ the input map. The projection is $\boldsymbol{o}_l=\mathbf{H}^{\mathrm{in}}_l\mathbf{Y}_l \in \mathbb{R}^{1 \times d}$. We switch to the delta objective $\mathcal{L}= \frac{1}{2} \left\|\boldsymbol{o}_l- \boldsymbol{t}_l \right\|_2^2$, whose gradient with respect to the fast weight is $\nabla_{\mathbf{Y}_l}\mathcal{L}_l=(\mathbf{H}^{\mathrm{in}}_l)^{\top}(\boldsymbol{o}_l-\boldsymbol{t}_l)=(\mathbf{H}^{\mathrm{in}}_l)^{\top}(\boldsymbol{o}_l - \operatorname{Blocks}_{\theta}(\boldsymbol{o}_l))$\footnote{This delta objective is also inspired by Delta Linear Attention~\citep{yang2024parallelizing,yang2025gated,kimiteam2025kimilinearexpressiveefficient,huang2026mdn}. The main distinction is that the looped block implicitly predicts the target representation, whereas Delta-based Linear Attention explicitly generates the value vector through linear projections.}. Substituting this gradient into Eq.~\ref{eq:unified-looped-transition} yields:
\begin{align}
\mathbf{Y}_{l+1} = (\mathbf{I} - \eta_l \mathbf{\Lambda}_l) \ \mathbf{Y}_l +\eta_l \ (\mathbf{H}^{\mathrm{in}}_l)^{\top}
 \left(\operatorname{Blocks}_{\theta}(\boldsymbol{o}_l) - \boldsymbol{o}_l \right) \label{eq.pre.Y_l+1} .
\end{align}
We parameterize the state-dependent input map $\mathbf{H}^{\mathrm{in}}_l$, decay operator $\mathbf{\Lambda}_l$ and step size $\eta_l$ as follows:
\begin{align}
\eta_l &= \sigma\left(
a^{\mathrm{lr}}
\cdot
\left(
\mathbf{W}^{\mathrm{lr}} \mathbf{Z}_l
\right)
+
b^{\mathrm{lr}}
\right) \cdot \eta_{l-1} &\in \mathbb{R}, \\
\mathbf{\Lambda}_l
&=
\operatorname{Diag}
\left(
\sigma\left(
\boldsymbol{a}^{\mathrm{wd}}
\cdot
\left(
\mathbf{W}^{\mathrm{wd}} \mathbf{Z}_l
\right)
+
\boldsymbol{b}^{\mathrm{wd}}
\right)
\right) & \in \mathbb{R}^{r\times r}, \\
\mathbf{H}^{\mathrm{in}}_{l}
&=  
\sigma\left(
\boldsymbol{a}^{\mathrm{in}}
\cdot
\left(
\mathbf{W}^{\mathrm{in}} \mathbf{Z}_l
\right)
+
\boldsymbol{b}^{\mathrm{in}}
\right) &\in \mathbb{R}^{1 \times r}, 
\end{align}
where the causal step size decreases monotonically with $\eta_l = \sigma \cdot \eta_{l-1}$ and is initialized with $\eta_{-1} = 1$. This design is inspired by learning-rate scheduling.
Here, $\sigma$ denotes the sigmoid function. We set $\mathbf{Z}_l=\operatorname{RMSNorm}(\operatorname{flatten}(\mathbf{Y}_l))\in\mathbb{R}^{rd}$. The projection matrices satisfy $\mathbf{W}^{\mathrm{wd}}, \mathbf{W}^{\mathrm{in}}\in \mathbb{R}^{r\times rd}$, while $\mathbf{W}^{\mathrm{lr}} \in \mathbb{R}^{1\times rd}$. The biases satisfy $\boldsymbol{b}^{\mathrm{wd}},\boldsymbol{b}^{\mathrm{in}}\in\mathbb{R}^{r}$ and $\boldsymbol{b}^{\mathrm{lr}}\in\mathbb{R}$. After $R$ loops, we obtain $\mathbf{Y}^{(R)}$.

This architecture can also be expressed in a transition form similar to that of HyperLoop, but with a different parameterization. Substituting $\boldsymbol{o}_l=\mathbf{H}^{\mathrm{in}}_l\mathbf{Y}_l$ into Eq.~\ref{eq.pre.Y_l+1}, we get the new transition:
\begin{align}
 \mathbf{Y}_{l+1}  &= \left(\mathbf{I} - \eta_l (\mathbf{\Lambda}_l + (\mathbf{H}^{\mathrm{in}}_l)^{\top}\mathbf{H}^{\mathrm{in}}_l)\right)\mathbf{Y}_l +\eta_l(\mathbf{H}^{\mathrm{in}}_l)^{\top}
 \operatorname{Blocks}_{\theta}(\mathbf{H}^{\mathrm{in}}_l\mathbf{Y}_l) \label{eq.after.Y_l+1} .
\end{align} 
Specifically, setting $\mathbf{H}^{\mathrm{pre}}=\mathbf{H}^{\mathrm{in}}$, $\mathbf{H}^{\mathrm{post}}=\eta_l\mathbf{H}^{\mathrm{in}}$, and $\mathbf{H}^{\mathrm{res}}=\mathbf{I}-\eta_l\left(\mathbf{\Lambda}_l+(\mathbf{H}^{\mathrm{in}}_l)^{\top}\mathbf{H}^{\mathrm{in}}_l\right)$ recovers the proposed transition. This correspondence allows us to implement this loop transition within the HyperLoop implementation, as shown in Algorithm~\ref{alg:hyperloop-delta-causal-lr} in Appendix~\ref{apx.pseudocode}.

\section{Experiments} 
\label{sec:experiments}

\vspace{-0.2cm}

We use an MoE backbone with sliding-window attention (SWA) and full attention in a 3:1 ratio~\citep{huang2026step35flashopen}. Our non-looped baselines are an 18-layer, 5B-parameter model with 600M activated parameters and a depth-scaled 32-layer, 10B-parameter model with 900M activated parameters. Following HyperLoop~\citep{zeitoun2026hyperloop}, each looped variant uses the middle-loop pattern and matches the unrolled training FLOPs of its non-looped baseline, with approximately 50\% as many parameters. All HyperLoop and OperLoop variants use four parallel streams ($r=4$) throughout the experiments. After unrolling, the looped and non-looped baseline models have the same effective depth, KV-cache layout, and attention pattern. Comparisons among looped variants under these matched settings isolate the effect of the loop transition.

All models are trained with a sequence length of $4096$. The 18-layer 5B and 32-layer 10B baselines are trained on approximately $300$B and $500$B tokens over $67$K and $120$K updates, respectively. Both scales use cosine learning-rate decay, a weight decay of $0.1$, and global gradient clipping at $1.0$. Model parameters are optimized with Muon using momentum $0.95$ and six Polar Express NS iterations~\citep{liu2025muon}. The looped variants follow the same training recipe as their corresponding baselines. We evaluate perplexity and downstream accuracy using the open-source \texttt{lm-eval-harness} toolkit~\citep{gao2023framework}, largely following its default settings.
Further model and training details are provided in Appendix~\ref{app:configuration}, and evaluation details are given in Appendix~\ref{app:evaluation}.

\vspace{-0.2cm}
\subsection{Controlled Study of Optimizer-Aligned Loop Transitions}\label{sec.align.input.map}
\begin{figure}[t!]
    \centering
    \includegraphics[width=\linewidth]{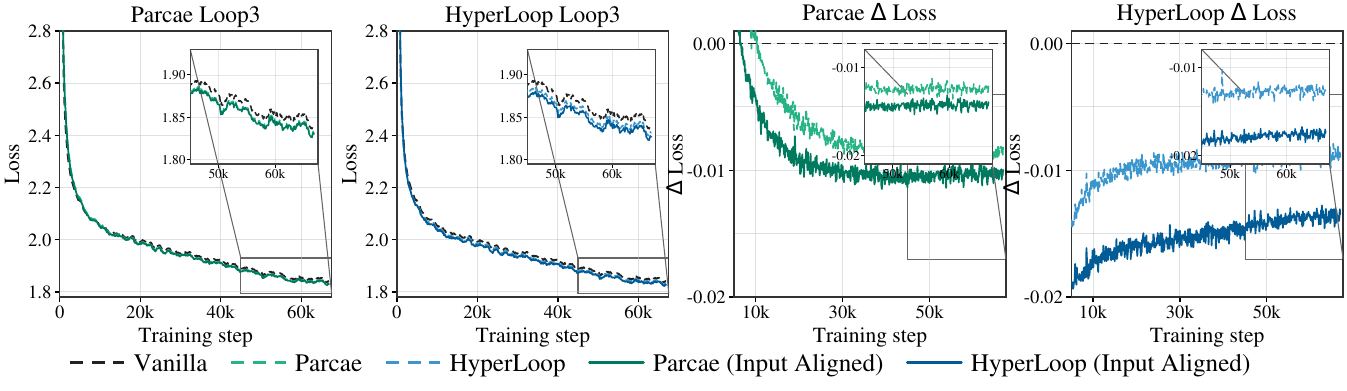}
    \vspace{-0.6cm}
    \caption{Training-loss comparison for optimizer-aligned loop transitions. The left two panels show training-loss trajectories for Parcae Loop3 and HyperLoop Loop3, together with their input-map-aligned variants. The right two panels show the corresponding loss differences relative to Vanilla Loop3, defined as $\Delta\mathcal{L}=\mathcal{L}_{\mathrm{variant}}-\mathcal{L}_{\mathrm{Vanilla}}$. Negative values indicate lower loss than Vanilla Loop3. 
    } \label{fig:loss-curves}
    \vspace{-0.2cm}
\end{figure}

\begin{table}[t!]
\centering
\caption{Effect of input-map alignment on Parcae and HyperLoop in language modeling and commonsense evaluation. The looped variants differ only in their transitions. Parameters and training FLOPs are relative to the 18-layer baseline. The subscript \(n\) denotes normalized accuracy. $\Delta$Avg. is the improvement over the corresponding unaligned model, in percentage points.
}
\label{tab:input-map-alignment}
\renewcommand{\arraystretch}{1.15}
\setlength{\tabcolsep}{3.6pt}
\resizebox{\linewidth}{!}{%
\begin{tabular}{lcccccccccccc}
\toprule
\multirow{2}{*}{\textbf{Model}}
&
\multirow{2}{*}{\textit{Param.}}
&
\multirow{2}{*}{\textit{FLOPs}}
&  \multicolumn{2}{c}{\textbf{PPL} $\downarrow$}& \multicolumn{7}{c}{\textbf{Commonsense Acc. (\%)} $\uparrow$}\\
\cmidrule(lr){4-5}\cmidrule(lr){6-12}
& & &  Lamb. &Wiki. & Lamb.
& ARCc$_n$ & ARCe$_n$
& HellaS.$_n$ & WinoG. & PIQA$_n$ & Avg. & $\Delta$Avg. \\
\midrule
\textbf{Baseline} $_\texttt{[18L]}$
& $1\times$
& $1\times$
&  6.56 &13.08 & 60.16 & 40.96 & 71.13 & 64.78 & 59.51 & 76.61 & 62.19 & --\\
\midrule
\textbf{Parcae} $_\texttt{[4L-4L×3-2L]}$
& $0.56\times$
&$1\times$
&  7.71 &13.98 & 57.44 & \textbf{38.57} & \textbf{65.91} & 62.00 & 56.12 & 74.81 & 59.14 & --\\
\rowcolor{orange!8} \textbf{Parcae} (Aligned)
& $0.56\times$
&$1\times$
&  \textbf{7.41} &\textbf{13.93} & \textbf{58.59} & 37.71 & 65.53 & \textbf{62.09} & \textbf{56.91} & \textbf{74.92} & \textbf{59.29} & \textbf{+0.15}\\
\midrule
\textbf{HyperLoop} $_\texttt{[4L-4L×3-2L]}$
& $0.56\times$
&$1\times$
&  7.68 &13.96 & 57.46 & \textbf{38.40} & 63.51 & 61.41 & 57.77 & \textbf{75.03} & 58.93 & --\\
\rowcolor{orange!8} \textbf{HyperLoop} (Aligned)
& $0.56\times$
&$1\times$
&  \textbf{7.35} &\textbf{13.93} & \textbf{58.63} & 35.15 & \textbf{66.67} & \textbf{62.83} & \textbf{58.56} & 74.97 & \textbf{59.47} & \textbf{+0.54}\\
\bottomrule
\end{tabular}%
}
\end{table}

We first test whether the framework can guide improvements to existing loop transitions. Specifically, input-map alignment enforces the transpose relationship between the read and write maps required by the corresponding gradient update. For Parcae, we replace
Eq.~\ref{eq:prelim-parcae-map} with the aligned update
$\boldsymbol{y}_{l+1}=\bar{\mathbf{A}}^{\top}
\operatorname{Blocks}_{\theta}(\bar{\mathbf{A}}\boldsymbol{y}_{l}+\bar{\mathbf{B}}\boldsymbol{e})$. For HyperLoop in Eq.~\ref{eq:prelim-hyperloop-recurrence}, we set
$\mathbf{H}^{\mathrm{post}}_l=\mathbf{H}^{\mathrm{pre}}_l$, giving
$\mathbf{Y}_{l+1}=\mathbf{H}^{\mathrm{res}}_l\mathbf{Y}_l+
(\mathbf{H}^{\mathrm{pre}}_l)^{\top}(\operatorname{Blocks}_{\theta}(\mathbf{H}^{\mathrm{pre}}_l\mathbf{Y}_l)+\mathbf{b}_l)$. This constraint also reduces the number of parameters in the HyperLoop transition by removing the separate $\mathbf{H}^{\mathrm{post}}$ map.

Alignment improves training loss, perplexity, and average commonsense accuracy for both Parcae and HyperLoop (Figure~\ref{fig:loss-curves}; Table~\ref{tab:input-map-alignment}). For example, HyperLoop improves average accuracy from 58.93 to 59.47 while using fewer transition parameters. \textbf{These results support input-map alignment as a useful design constraint and provide empirical support for the optimization view}.


\subsection{Main Downstream Results}
\label{subsec:main_results}
\begin{table*}[t!]
\centering
\caption{Main results on commonsense, math, code, and reasoning benchmarks. Parameters and training FLOPs are relative to the corresponding non-looped baseline. \textbf{Bold} and \underline{underlined} values indicate the best and second-best results among looped models at each scale, respectively. The subscript $n$ denotes normalized accuracy, and $*$ denotes CoT evaluation. H.Eval and H.Eval$^+$ denote HumanEval and HumanEval+, respectively, both report pass@1 with greedy decoding. 
}
\scriptsize
\setlength{\tabcolsep}{2.6pt}

\renewcommand{\arraystretch}{1.1}
\resizebox{\textwidth}{!}{%
\begin{tabular}{@{}llccc>{\columncolor{orange!8}}cccc>{\columncolor{orange!8}}c@{}}
\toprule

& & \multicolumn{4}{c}{\textit{300B Training Tokens, 4k Sequence Length}} & \multicolumn{4}{c}{\textit{500B Training Tokens, 4k Sequence Length}} \\

\cmidrule(lr){3-6}
\cmidrule(lr){7-10}

\textbf{Category}
& \textbf{Dataset}
& \textbf{Base}$_\texttt{[18L]}$
& \multicolumn{3}{c}{\textbf{Loop}$_\texttt{[4L-4L×3-2L]}$}
&  \textbf{Base}$_\texttt{[32L]}$& \multicolumn{3}{c}{\textbf{Loop}$_\texttt{[4L-8L×3-4L]}$}\\

\cmidrule{4-6}
\cmidrule{8-10}

(\textit{Metrics})
&
& \textbf{5B MoE}
& \textbf{Parcae}
& \textbf{HyperLoop}
& \textbf{OperLoop}
& \textbf{10B MoE}
&  \textbf{Parcae}&\textbf{HyperLoop}
& \textbf{OperLoop} \\

\midrule

&
\textit{Parameters}& $1\times$
& $0.56\times$
& $0.56\times$
& $0.56\times$
&  $1\times$&  $0.48\times$&$0.48\times$&  $0.48\times$\\

&
\textit{FLOPs}& $1\times$
& $1\times$
& $1\times$
& $1\times$
&  $1\times$& $1\times$&$1\times$& $1\times$\\

\midrule

\multirow{2}{*}{(\textit{PPL $\downarrow$})}
& \textbf{Wikitext}
& 13.08 & 13.98 & \underline{13.96} & \textbf{13.90}
& 11.06 &  \underline{11.76} & 11.87 &  \textbf{11.74} \\
& \textbf{Lambada}& 6.56 & 7.71 & \underline{7.68} & \textbf{7.21}
& 4.82 & \underline{5.20} & 5.32 & \textbf{5.14} \\

\midrule

\multirow{7}{*}{\textit{Com.Sen.}} & \textbf{Lambada}& 60.16 & 57.44 & \underline{57.46} & \textbf{59.29}
& 67.48 & \underline{65.96} & 64.91 & \textbf{66.04} \\

& \textbf{ARCc}$_{ \rm{ n}}$& 40.96 & \textbf{38.57} & \underline{38.40} & 37.63
& 44.62 & \textbf{45.73} & \underline{45.56} & 44.34 \\

& \textbf{ARCe}$_{ \rm{ n}}$& 71.13 & \underline{65.91} & 63.51 & \textbf{65.99}
& 72.26 & 71.17 & \underline{73.23} & \textbf{73.65} \\

& \textbf{HellaSwag}$_{ \rm{ n}}$& 64.78 & \underline{62.00} & 61.41 & \textbf{62.20}
& 71.78 & \underline{70.05} & 69.33 & \textbf{70.07} \\

& \textbf{WinoGrande}& 59.51 & 56.12 & \underline{57.77} & \textbf{57.85}
& 63.61 & 62.98 & \textbf{64.17} & \underline{63.43} \\

& \textbf{PIQA}$_{ \rm{ n}}$& 76.61 & 74.81 & \underline{75.03} & \textbf{75.63}
& 78.62 & 77.41 & \underline{77.42} & \textbf{77.88} \\

\cmidrule{2-10}
(\textit{LL} $\uparrow$) & \textit{LL Avg.} & \textit{62.19} & \underline{\textit{59.14}} & \textit{58.93} & \textbf{\textit{59.77}} & \textit{66.40} & \textit{65.55} & \underline{\textit{65.77}} & \textbf{\textit{65.90}}
\\
\midrule

\multirow{2}{*}{\textit{Math}}& \textbf{GSM8K}$_{ \rm{ 5shot}}$
& 15.77
& \underline{14.25}
& 12.89
& \textbf{15.85}
&  33.06 & 32.07 & \textbf{33.36} & \underline{33.21} \\

& \textbf{GSM8K}${_{ \rm{ 8shot}}}^{*}$
& 17.44
& \underline{17.06}
& 14.03
& \textbf{17.13}
&  37.15 & \underline{39.35} & 37.68 & \textbf{39.65} \\

\multirow{2}{*}{\textit{Code}}
& \textbf{H.Eval}$_{ \rm{ 0shot}}$& 21.95
& \underline{21.34}& 18.90
& \textbf{23.78}&  28.66 & 28.05 & \textbf{30.49} & \underline{29.88} \\


& \textbf{H.Eval$^+$}$_{ \rm{ 0shot}}$
& 18.90
& \underline{17.68}
& 16.46
& \textbf{20.12}
& 25.00
& 24.39 & \underline{25.61}
& \textbf{26.22} \\

\multirow{2}{*}{\textit{Reason.}}& \textbf{MMLU}$_{ \rm{ 5shot}}$
& 37.52
& 42.46
& \underline{42.71}
& \textbf{44.41}
&  52.81 & \textbf{53.92} & 52.62 & \underline{52.76} \\
& \textbf{BBH}${_{ \rm{ 3shot}}}^{*}$ & 31.75 & \underline{31.70} & 31.05 & \textbf{32.18} & 41.86 & \underline{39.90} & 39.64 & \textbf{43.04} \\

\cmidrule{2-10}
(\textit{EM} $\uparrow$) & \textit{EM Avg.} 
& \textit{23.89}
& \underline{\textit{24.08}}
& \textit{22.67}
& \textit{\textbf{25.58}}
& \textit{36.42} & \textit{36.28} & \underline{\textit{36.57}} & \textbf{\textit{37.46}}
\\

\midrule

\textbf{Overall}
& \textit{EM \& LL Avg.}
& \textit{43.04}
& \underline{\textit{41.61}}
& \textit{40.80}
& \textbf{\textit{42.68}}
& \textit{51.41} & \textit{50.92} & \underline{\textit{51.17}} & \textbf{\textit{51.68}} \\

\bottomrule
\end{tabular}%
}
\label{tab:main-results}
\end{table*}

We next test whether the framework can guide new transition designs by evaluating OperLoop. At both model scales, OperLoop improves average generative performance over the evaluated baselines.

Relative to the looped baselines in Table~\ref{tab:main-results}, OperLoop achieves the highest overall and EM averages among the evaluated looped models at both scales. OperLoop also achieves the lowest PPL and highest LL average among the looped variants. Under matched training FLOPs, these \textbf{results support the benefits of its transition design for both language modeling and generation}.

Relative to the non-looped baselines in Table~\ref{tab:main-results}, OperLoop uses approximately half the parameters while improving the EM average by 1.69 and 1.04 percentage points at the two scales. For PPL and LL accuracy, all looped models perform slightly worse at both scales. These differing rankings indicate that PPL and LL alone would understate looped models' generative performance in this comparison. Overall, \textbf{even though its PPL and LL results are slightly worse than those of the non-looped baseline, a looped model can achieve better results in the EM evaluation}.

The gains also vary across tasks and scales. Performance on both code benchmarks improves over that of the non-looped baselines at both scales. At the larger scale, OperLoop surpasses the baseline on GSM8K with CoT and BBH, reaching 43.04 on BBH compared with 41.86. The higher EM average thus reflects different task-level gains across scales. \textbf{Together, the alignment and OperLoop results support the framework for modifying existing transitions and designing new ones}.



\newpage
\subsection{Ablation Study}\label{sec:ablations}

We first evaluate scaling from Loop 3 to Loop 6, then ablate the delta objective and step-size schedule with Loop 3. We report Wikitext and Lambada perplexity (PPL), average commonsense accuracy based on log-likelihood (LL), and the reported six-task exact match (EM) average.


\begin{wrapfigure}{r}{0.32\linewidth}
\vspace{-0.4cm}
\centering
\resizebox{\linewidth}{!}{\includegraphics{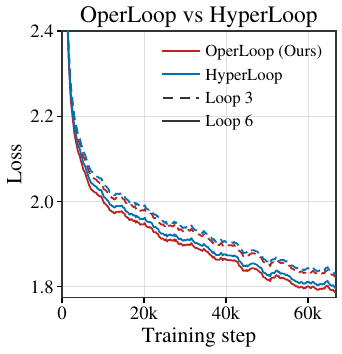}}
\vspace{-0.8cm}
\caption{Scaling loop from three (Loop3) to six (Loop6).}
\label{fig:ablation-loopscale}
\vspace{-0.32cm}
\end{wrapfigure}

\vspace{-0.2cm}
\paragraph{Compute Scaling.} We train Loop3 and Loop6 separately with three and six looped steps and evaluate each at its trained depth. Loop6 increases both training and inference compute to $1.67\times$ the baseline FLOPs (Figure~\ref{fig:ablation-loopscale}). Additional loops improve all six generation scores for both models, with the largest gains on math tasks (Table~\ref{tab:ablation-loop-count}). For OperLoop, the EM average rises from 25.58 to 31.90, with 8-shot CoT GSM8K improving from 17.13 to 31.08. PPL and LL accuracy also improve, although Loop6 still trails the non-looped baseline on Wikitext PPL and LL accuracy while surpassing it on Lambada PPL. The matched-compute Loop3 comparison further shows why generative evaluation matters: OperLoop exceeds the non-looped baseline in EM average despite worse PPL and LL accuracy. Overall, additional recurrent computation improves generative performance at a fixed parameter budget, highlighting the value of evaluating looped models beyond PPL and LL.
\begin{table}[t!]
\centering
\caption{Downstream evaluation at Loop3 and Loop6. Models are trained separately at each loop count. Com. Avg. denotes average commonsense accuracy. EM Avg. is the arithmetic mean of the six reported
math, code, and reasoning scores, and $\Delta$ is its difference from the
baseline.}
\label{tab:ablation-loop-count}
\renewcommand{\arraystretch}{1.15}
\setlength{\tabcolsep}{3.2pt}
\resizebox{\linewidth}{!}{%
\begin{tabular}{@{}lccccccccccccc@{}}
\toprule
& & & \multicolumn{2}{c}{PPL $\downarrow$} & LL Acc $\uparrow$ &\multicolumn{8}{c}{EM Acc $\uparrow$}\\
\cmidrule(lr){4-5}
\cmidrule(lr){6-6}
\cmidrule(lr){7-14}
Model & Params. & FLOPs & Wiki. & Lamb. & Com. Avg. & GSM8K & GSM8K$^{*}$ & H.Eval & H.Eval$^{+}$  & MMLU &BBH$^{*}$ & EM Avg. & $\Delta$ \\
\midrule
Baseline (5B, 18L) & $1\times$ & $1\times$ & \textbf{13.08} & 6.56 & \textbf{62.19} & 15.77 & 17.44 & 21.95 & 18.90  & 37.52 &31.75 & 23.89 & - \\
\midrule
HyperLoop (Loop 3) & $0.56\times$ & $1\times$ & 13.96 & 7.68 & 58.93 & 12.89 & 14.03 & 18.90 & 16.46  & 42.71 &31.05 & 22.67 & -1.22 \\
\rowcolor{orange!8} OperLoop (Loop 3) & $0.56\times$ & $1\times$ & 13.90 & 7.21 & 59.77 & 15.85 & 17.13 & 23.78 & 20.12  & 44.41 &32.18 & 25.58 & +1.69 \\
\midrule
HyperLoop (Loop 6) & $0.56\times$ & $1.67\times$ & 13.35 & 6.51 & 61.39 & 22.44 & 27.14 & 24.39 & \underline{20.73}  & 48.25 &35.54 & 29.75 & +5.86 \\
\rowcolor{orange!8} OperLoop (Loop 6) & $0.56\times$ & $1.67\times$ & 13.16 & \textbf{6.39} & 61.72 & \textbf{24.87} & \textbf{31.08} & \textbf{27.44} & \textbf{21.95}  & \textbf{50.15} &\textbf{35.89} & \textbf{31.90} & \textbf{+8.01} \\
\bottomrule
\end{tabular}%
}
\end{table}



\begin{wraptable}{r}{0.54\linewidth}
\vspace{-0.76cm}
\centering
\caption{Delta objective ablation at three loops with the causal step-size
schedule fixed.}
\label{tab:ablation-delta-target}
\renewcommand{\arraystretch}{1.15}
\resizebox{\linewidth}{!}{%
\begin{tabular}{@{}lcccccc@{}}
\toprule
Variant & \shortstack{Delta\\Obj.} & \shortstack{Step Size\\Schedule}
& \shortstack{Wiki.\\PPL $\downarrow$}
& \shortstack{Lamb.\\PPL $\downarrow$}
& \shortstack{LL\\Avg. $\uparrow$}
& \shortstack{EM\\Avg. $\uparrow$} \\
\midrule
\rowcolor{orange!8} OperLoop (Ours) & $\checkmark$ & Causal & \textbf{13.90} & \textbf{7.21} & \textbf{59.77} & \textbf{25.58} \\
w/o Delta Obj. & $\times$ & Causal & 13.92 & 7.25 & 59.36 & 24.84 \\
\bottomrule
\end{tabular}%
}
\end{wraptable}

\paragraph{Delta Objective Ablation.}
With the causal step-size schedule fixed, removing the delta objective worsens Wikitext PPL from $13.90$ to $13.92$ and Lambada PPL from $7.21$ to $7.25$, while lowering average commonsense accuracy from $59.77$ to $59.36$ and the EM average on generation tasks from $25.58$ to $24.84$ (Table~\ref{tab:ablation-delta-target}). Thus, the delta objective improves both language modeling and generative performance. This indicates that loop transition designs derived from an objective under the optimization view can improve performance.

\begin{wraptable}{r}{0.54\linewidth}
\vspace{-0.76cm}
\centering
\caption{Step-size schedule ablation at three loops with the delta objective fixed.}
\label{tab:ablation-lr}
\renewcommand{\arraystretch}{1.15}
\resizebox{\linewidth}{!}{%
\begin{tabular}{@{}lcccccc@{}}
\toprule
Variant & \shortstack{Delta\\Obj.} & \shortstack{Step Size\\Schedule}& \shortstack{Wiki.\\PPL $\downarrow$}
& \shortstack{Lamb.\\PPL $\downarrow$}
& \shortstack{LL\\Avg. $\uparrow$}
& \shortstack{EM\\Avg. $\uparrow$} 
\\
\midrule
\rowcolor{orange!8} OperLoop (Ours) & $\checkmark$ & Causal & \textbf{13.90} & 7.21& 59.77& \textbf{25.58} \\
LR w Non-causal & $\checkmark$ & Non-Causal &  13.93&  \textbf{7.15}&  \textbf{59.80}&  24.02\\
w/o LR & $\checkmark$ & $\times$&  13.95&  7.48&  59.38&  24.40\\
\bottomrule
\end{tabular}%
}
\end{wraptable}
\paragraph{Step-Size Schedule Ablation.}
With the delta objective fixed, we compare the causal schedule $\eta_l=\sigma(\cdot)\eta_{l-1}$, the non-causal schedule $\eta_l=\sigma(\cdot)$, and a fixed unit step $\eta=1$ (Table~\ref{tab:ablation-lr}). The causal and non-causal schedules use the state-dependent sigmoid factor in Section~\ref{sec:optimizer-guided-loop-design}. The causal schedule improves all reported metrics over unit steps, including EM by 1.18 percentage points. Without a monotonicity constraint, step-size fluctuations may induce oscillatory updates that hinder generation. Relative to the non-causal schedule, it improves the EM average by 1.56 percentage points. Since the step size is inspired by learning-rate scheduling, these performance gains support the optimization view.

\vspace{-0.4cm}
\section{Analysis and Discussion}
\vspace{-0.2cm}

Table~\ref{tab:loop_comparison_transposed} extends the analysis in §~\ref{sec:existing-transitions-coordinates} to additional loop models. We compare their \textit{projections} ($\boldsymbol{o}_l$), local \textit{objectives} ($\mathcal{L}$), and \textit{optimizer update rules} (weight decay $\lambda$, step size $\eta$, and input map $\boldsymbol{x}$) to identify directions for future transition design.
\begin{table}[t!]
\centering
\caption{Extended comparison of loop transitions under the optimization framework. Recurrent states are treated as fast weights corresponding to $\mathbf{W}$. Vector and expanded states are denoted by $\boldsymbol{y}_l\in\mathbb{R}^{d}$ and $\mathbf{Y}_l\in\mathbb{R}^{r\times d}$, respectively. Shared blocks predict the implicit target $\boldsymbol{t}_l=\operatorname{Blocks}_{\theta}(\boldsymbol{o}_l)$. {\color{darkred}Red} highlights input-map mismatches under the chosen optimizer update rule. Here, $\boldsymbol{e}$ is the Prelude output, $\boldsymbol{\epsilon}$ is a noise vector, and ${\ast}$ denotes token shifts along the context. For training-free Recirculation, we compare only its transition. The Vanilla category includes Universal Transformers~\citep{dehghani2019universal}, MoEUT~\citep{csordas2024moeut}, Recursive Transformer~\citep{bae2025recursive} and Ouro~\citep{zhu2025ouro}.}
\label{tab:loop_comparison_transposed}
\setlength{\tabcolsep}{1.1pt}
\normalsize
\resizebox{\linewidth}{!}{
\renewcommand{\arraystretch}{1.1}
\begin{tabular}{@{}llccccc@{}}
\toprule
 & & \multicolumn{5}{c}{$\mathbf{W}_{i+1}  =
(1-\eta\lambda)\,\mathbf{W}_{i}  -\eta\, (\boldsymbol{x}^{\top}f^{\prime}(\boldsymbol{o}_l))$}
\\
\cmidrule(lr){3-7}
& \textbf{Original Transition}
& $\boldsymbol{o}_l$
& $\lambda$
& $\eta$
& $\boldsymbol{x}$ 
& $\mathcal{L}$ 

\\

\midrule 

\makecell[l]{
\textbf{Vanilla}
}

&
$\boldsymbol{y}_{l+1}
=
\operatorname{Blocks}_{\theta}(\boldsymbol{o}_l)$
&
$\boldsymbol{y}_l$
&
$1$
&
$1$
&
$\mathbf{I}$
&
$-\langle\boldsymbol{t}_l,\boldsymbol{o}_l\rangle$

\\ 
\midrule

\makecell[l]{
\textbf{Huginn}
\\ 
{\scriptsize\citep{geiping2025scaling}}
}

&
$\boldsymbol{y}_{l+1}
=
\operatorname{Blocks}_{\theta}
(\boldsymbol{o}_l)$
&
$\boldsymbol{y}_l+\mathbf{e}$
&
$1$
&
$1$
&
$\mathbf{I}$
&
$-\langle\boldsymbol{t}_l,\boldsymbol{o}_l\rangle$
 
\\
\midrule

\makecell[l]{
\textbf{Parcae}\\
{\scriptsize\citep{prairie2026parcae}}
}
&
$\boldsymbol{y}_{l+1}
=
\operatorname{Blocks}_{\theta}
(\boldsymbol{o}_l)$
&
${\color{darkred}\bar{\mathbf{A}}}\boldsymbol{y}_l
+
\bar{\mathbf{B}}\mathbf{e}$
&
$1$
&
$1$
&
${\color{darkred}
\mathbf{I}}$
&
$-\langle\boldsymbol{t}_l,\boldsymbol{o}_l\rangle$
\\
\midrule

\makecell[l]{
\textbf{Full-Bandwidth}$^{\ast}$
\\
{\scriptsize\citep{wang2026full}}
}
&
$\boldsymbol{y}_{l+1}
=\operatorname{Blocks}_{\theta}(\boldsymbol{o}_l)
$
&
${\color{darkred}\sigma(\mathbf{W}^{\rm g} \boldsymbol{e}_l) \odot \mathbf{W}^{\rm u} }\boldsymbol{y}_l$
&
$1$
&
$1$
&
${\color{darkred}
\mathbf{I}
}$
&
$-\langle\boldsymbol{t}_l,\boldsymbol{o}_l\rangle$
\\
\midrule

\makecell[l]{
\textbf{RecurrentGPT}
\\
{\scriptsize\citep{hegazy2026recurrentgpt}}
}
&
$\boldsymbol{y}_{l+1}
=
\boldsymbol{g}_{l} \odot \boldsymbol{y}_{l}
+
( 1 - \boldsymbol{g}_{l}) \odot 
\operatorname{Blocks}_{\theta}(\boldsymbol{o}_l)$
&
${\color{darkred}\mathbf{W}} \ [\boldsymbol{y}_l + \boldsymbol{\epsilon}, \boldsymbol{e}]$
&
$1$
&
$1 - \boldsymbol{g}_{l}$
&
$
{\color{darkred}
\mathbf{I}
}$
&
$-\langle\boldsymbol{t}_l,\boldsymbol{o}_l\rangle$

\\

\midrule

\makecell[l]{
\textbf{Recirculation}$^{\ast}$
\\
{\scriptsize\citep{mozer2026recirculation}}
}
&
$\boldsymbol{y}_{l+1}
=
(1 - \alpha) \ \boldsymbol{y}_{l}
+
\alpha \|\boldsymbol{y}_l \|_2
 \frac{\operatorname{Blocks}_{\theta}(\boldsymbol{o}_l)}{\|\operatorname{Blocks}_{\theta}(\boldsymbol{o}_l)\|_2}$
&
$\boldsymbol{y}_l$
&
$\frac{1}{ \|\boldsymbol{y}_l \|_2}$
&
$ \alpha \|\boldsymbol{y}_l \|_2$
&
${
\mathbf{I}
}$
&
$-\langle\frac{\boldsymbol{t}_l}{\|\boldsymbol{t}_l\|_2}, \boldsymbol{o}_l\rangle$

\\

\midrule

\makecell[l]{
\textbf{HyperLoop}
\\
{\scriptsize\citep{zeitoun2026hyperloop}}
}
&
$\mathbf{Y}_{l+1}
=
\mathbf{H}^{\mathrm{res}}_l\mathbf{Y}_l
+
(\mathbf{H}^{\mathrm{post}}_l)^{\top}
\operatorname{Blocks}_{\theta}(\boldsymbol{o}_l)$
&
${\color{darkred}\mathbf{H}^{\mathrm{pre}}_l} \ \mathbf{Y}_l$
&
$\mathbf{I}-\mathbf{H}^{\mathrm{res}}_l$
&
$1$
&
${\color{darkred}
\mathbf{H}^{\mathrm{post}}_l}$
&
$-\langle\boldsymbol{t}_l,\boldsymbol{o}_l\rangle$

\\

\midrule

\makecell[l]{
\textbf{OperLoop}
\\
(Ours)
}
&
$
\begin{aligned}
\mathbf{Y}_{l+1}
&=
(\mathbf{I} - \eta_l \mathbf{\Lambda}_l) \mathbf{Y}_{l} \\
&+
\eta_l (\mathbf{H}_{l}^{in})^{\top}(
\operatorname{Blocks}_{\theta}(\boldsymbol{o}_l) - \boldsymbol{o}_l )
\end{aligned}
$
&
$\mathbf{H}_{l}^{\rm in} \mathbf{Y}_l$
&
$\mathbf{\Lambda}_l$
&
${\eta}_{l}$
&
$\mathbf{H}_{l}^{in}$
&
$\frac{1}{2} \| \boldsymbol{t}_l - \boldsymbol{o}_l \|^2_2$

\\
\bottomrule
\end{tabular}
}
\vspace{-0.25cm}
\end{table}

Input injection, gating, and expanded-state readouts define different \textit{projections}. Decay and step size control state retention and update scales in the \textit{optimizer update rule}. Under the chosen local gradient formulation, input-map mismatches occur in Parcae, Full-Bandwidth, RecurrentGPT, and HyperLoop. The framework identifies which transitions need input-map alignment to match the chosen optimizer update rule.

Most compared transitions map to a negative inner product \textit{objective}, with Recirculation using a normalized target. Their write term is driven by the predicted target. OperLoop's delta objective instead uses the difference between the target and projected state, adding an explicit correction for the current projection. The ablation in §~\ref{sec:experiments} supports this objective choice, illustrating how the framework guides the design of a different update signal.

This comparison suggests a roadmap: specify the \textit{projection} and local \textit{objective}, select an \textit{optimizer update rule}, and derive the loop transition. Future work can vary the structure, computation, and initialization of fast weights and explore linear or nonlinear \textit{projections}. \textit{Objectives} shape update signals, and their values may guide adaptive looping for early stopping or extrapolation. Ideas from advanced \textit{optimizers} like Muon~\citep{jordan2024muon} or Hyperball~\citep{wen2026fantastic} may inspire new loop architectures to improve loop stability and reasoning performance.

\vspace{-0.2cm}
\section{Conclusion}
\vspace{-0.2cm}

In this work, we present an optimization view of Looped Transformers that treats recurrent states as fast weights updated under local prediction objectives. This framework organizes loop transitions into a projection, an objective, and an optimizer rule, with shared blocks predicting implicit targets. It identifies input-map mismatches in existing transitions and guides the design of OperLoop. The alignment studies, downstream evaluations, and component ablations support its design guidance. The extended comparison further identifies directions for improving loop transitions in the future.


\newpage

\newpage

\subsection*{AI use statement}


In this work, we used generative AI tools for language polishing, grammar correction, and figure creation to improve the clarity and readability of the manuscript. We also used these tools to help identify related literature. We did not use generative AI tools to generate original research ideas. We have reviewed all AI-assisted work, including the edited text, figures, and suggested references, to ensure accuracy and consistency with our research. We take responsibility for the final content of this work, including text, claims, or artifacts produced with the aid of generative AI.

\subsubsection*{Acknowledgments}
This work is supported in part by the Guangdong Basic and Applied Basic Research Foundation (NO.2025A1515011758) and  Guangdong Provincial Key Lab of Integrated Communication, Sensing and Computation for Ubiquitous Internet of Things (No. 2023B1212010007).

\bibliography{iclr2027_conference}
\bibliographystyle{iclr2027_conference}

\newpage
\appendix

\section{Extended Related Work}\label{apx.extended.related.work}
\paragraph{Origins and development of Looped Transformers.}
Looped Transformers build on parameter sharing across depth and repeated application of a shared transition. Adaptive Computation Time makes the number of recurrent updates input-dependent~\citep{graves2016adaptive}. Universal Transformers repeatedly apply a shared Transformer block with adaptive per-position computation~\citep{dehghani2019universal}. ALBERT improves parameter efficiency through cross-layer sharing~\citep{lan2020albert}. Later work studies length generalization with adaptive loop counts~\citep{fan2025looped}. RecurrentGPT adds recurrent modulation to a shared Transformer core~\citep{hegazy2026recurrentgpt}. Training-free methods introduce inference-time looping into frozen pretrained models without additional training~\citep{chen2026training}. Looping thus increases effective depth without adding new parameters at each step.
\vspace{-0.1cm}

\paragraph{Looping, Test-Time Compute, and Reasoning.}
Looping can scale test-time computation through additional recurrent passes, as demonstrated by Huginn~\citep{geiping2025scaling}. Broader work on test-time scaling studies verifier-guided search and iterative response revision~\citep{snell2024scaling}. Ouro studies latent reasoning through repeated hidden-state computation~\citep{zhu2025ouro}. Attractor Models refine embeddings by solving for a fixed point~\citep{fein2026solve}. Saunshi et al.\ analyze the reasoning capacity of looped Transformers and establish theoretical connections to chain-of-thought (CoT)~\citep{saunshi2025reasoning}. Rodkin et al.\ study recurrence, memory, and test-time computation in controlled cellular-automata tasks~\citep{rodkin2026beyond}. CoT prompting elicits explicit intermediate reasoning steps~\citep{wei2022chain,kojima2022large}, whereas looping refines hidden states. The two approaches therefore provide complementary routes to test-time reasoning.
\vspace{-0.1cm}

\paragraph{Recurrent Updates Along Depth and Context.}
Looped Transformers apply recurrence along depth, updating the state at each visit to a shared block for a fixed input. Residual connections propagate representations across layers in ResNets and Transformers~\citep{he2016deep,vaswani2017attention}. Looping additionally shares block parameters across recurrent visits. Context-axis recurrence instead updates the state as tokens are processed. Examples include recurrent networks~\citep{hochreiter1997long}, Fast Weight and Linear-Attention models~\citep{ba2016using,schlag2021transformers}, and Structured State Space models~\citep{gu2021efficiently,dao2024transformers}. Delta-based linear attention models update recurrent memory along context using delta-rule corrections~\citep{yang2025gated,huang2026mdn,kimiteam2025kimilinearexpressiveefficient}. These updates offer useful analogies for loop transitions, but follow a different recurrence axis.
\vspace{-0.1cm}
\paragraph{Optimization Views Beyond Looped Transformers.}
Viewing recurrent updates as optimization updates connects our framework to work on neural computation beyond looped Transformers. LISTA unfolds iterative shrinkage and thresholding for sparse coding into a trainable network~\citep{gregor2010learning}. \citet{li2018optimization} interprets feedforward computation with shared linear transformations as gradient descent. They use accelerated optimization algorithms to motivate network architectures. Learned optimizers use recurrent networks to learn parameter-update rules~\citep{andrychowicz2016learning}. Studies of linear regression show that Transformers can implement gradient-based in-context learning in their forward pass~\citep{akyurek2023what,vonoswald2023transformers}. Test-Time Training (TTT) layers represent the recurrent hidden state as model parameters. As tokens are processed, gradient descent updates these parameters using a self-supervised objective~\citep{sun2024learning}. \citet{yang2024looped} study looped Transformers as iterative learning algorithms for in-context learning, while \citet{gatmiry2024can} show that, in linear in-context regression, a trained looped Transformer can implement multi-step preconditioned gradient descent. Our framework takes the reverse direction: we use an optimizer update as a construction principle to derive depth-axis loop transitions from a projection and a local objective. This local objective defines the forward transition and differs from the language-modeling loss used for end-to-end training.

\section{Pseudocode for OperLoop}\label{apx.pseudocode}

Algorithm~\ref{alg:hyperloop-delta-causal-lr} gives a common implementation of HyperLoop and OperLoop. Blue and orange denote the HyperLoop and OperLoop transition rules, respectively. Both initialize the recurrent state by duplicating the Prelude output into parallel streams. At each loop, OperLoop computes a state-dependent input map, decay operator, and causal step size. These determine the read, write, and residual maps. The residual map incorporates the delta correction in Eq.~\ref{eq.after.Y_l+1}. After $R$ recurrent steps, the streams are averaged and passed through the Coda and language-model head.

\paragraph{State layout and module correspondence.}
All HyperLoop and OperLoop experiments use $r=4$ parallel streams. The state $\mathbf{Y}_l\in\mathbb{R}^{r\times d}$ represents one token, with batch and token indices omitted. Flattening and RMSNorm produce the controller input $\mathbf{Z}_l\in\mathbb{R}^{rd}$. The map $\mathbf{H}^{\mathrm{in}}_l\in\mathbb{R}^{1\times r}$ reads the streams into the block input. The decay operator $\mathbf{\Lambda}_l\in\mathbb{R}^{r\times r}$ acts on the stream dimension. Each $\operatorname{RecurrentBlock}_{\theta}$ call represents the shared stack $\operatorname{Blocks}_{\theta}$ of $K$ Transformer blocks.

\begin{algorithm}[t!]
\small
\caption{Common implementation of \algblue{HyperLoop}~\citep{zeitoun2026hyperloop} and \algred{OperLoop} (\textbf{ours}).}
\label{alg:hyperloop-delta-causal-lr}
\begin{algorithmic}[1]
\Require
Input tokens $\mathbf{x}$;
recurrent-step index $l$;
number of streams $r$;
hidden dimension $d$;
number of loops $R$; Trainable bias $\boldsymbol{e}_l \in \{\boldsymbol{e}_l\}_{l=0,\cdots,R-1}$; Trainable weights $\{\mathbf{W}^{\tau}_l, \boldsymbol{b}^{\tau}_l,\boldsymbol{a}^{\tau}_l\}_{l=0,\cdots,R-1}$, \algblue{$\tau \in \{\text{res, pre, post}\}$} or \algred{$\tau \in \{\text{wd, in, lr}\}$}.

\Ensure Language-model head output $\operatorname{LMHead}_{\theta}(\mathbf{h}_{\mathrm{out}})$.


\State $\mathbf{h} \gets \operatorname{Embedding}_{\theta}(\mathbf{x})$
\State $\mathbf{h} \gets \operatorname{Prelude}_{\theta}(\mathbf{h})$

\State $\mathbf{Y}_0
  \gets \operatorname{Duplicate}_{r}(\mathbf{h})$ \Comment{$\in \mathbb{R}^{d} \to \mathbb{R}^{r \times d}$}

\State \algred{$\eta_{-1} \gets \mathbf{1}$}

\For{$l \gets 0$ \textbf{to} $R-1$}

  \State $\mathbf{Z}_l
      \gets \operatorname{RMSNorm}(\operatorname{flatten}(\mathbf{Y}_l))$ \Comment{$\in \mathbb{R}^{r \times d} \to \mathbb{R}^{rd}$}

  \State \algred{$
      \eta_l
      \gets
       \eta_{l-1} \cdot \sigma(
        {a}^{\mathrm{lr}}
        \cdot
        \left(
        \mathbf{W}^{\mathrm{lr}} \mathbf{Z}_l
        \right)
        +
        {b}^{\mathrm{lr}}
        ) $}  \Comment{$ \in \mathbb{R}$}

  \State \algred{$
      \mathbf{H}^{\mathrm{in}}_l
      \gets
        \sigma(
        \boldsymbol{a}^{\mathrm{in}}
        \cdot
        \left(
        \mathbf{W}^{\mathrm{in}} \mathbf{Z}_l
        \right)
        +
        \boldsymbol{b}^{\mathrm{in}}
        )$}  \Comment{$ \in \mathbb{R}^{1 \times r}$}

  \State \algred{$
      \mathbf{\Lambda}_l
      \gets
      \operatorname{Diag}
        \left(
        \sigma\left(
        \boldsymbol{a}^{\mathrm{wd}}
        \cdot
        \left(
        \mathbf{W}^{\mathrm{wd}} \mathbf{Z}_l
        \right)
        +
        \boldsymbol{b}^{\mathrm{wd}}
        \right)
        \right)$} \Comment{$\in \mathbb{R}^{r \times r} $}


  \State $
      \mathbf{H}^{\mathrm{pre}}_l
      \gets
      \algbluemath{
        \sigma\left(
        \boldsymbol{a}^{\mathrm{pre}}_l
        \cdot
        \left(
        \mathbf{W}^{\mathrm{pre}}_l \mathbf{Z}_l
        \right)
        +
        \boldsymbol{b}^{\mathrm{pre}}_l
        \right) }  \text{ or }
      \algredmath{\mathbf{H}^{\mathrm{in}}_l}$ \Comment{$\in \mathbb{R}^{1 \times r} $}

  \State  $\mathbf{H}^{\mathrm{post}}_l
      \gets
        \algbluemath{
        2 \cdot \sigma\left(
        \boldsymbol{a}^{\mathrm{post}}_l
        \cdot
        \left(
        \mathbf{W}^{\mathrm{post}}_l \mathbf{Z}_l
        \right)
        +
        \boldsymbol{b}^{\mathrm{post}}_l
        \right) }  \text{ or }
      \algredmath{\eta_l \mathbf{H}^{\mathrm{in}}_l}$\Comment{$\in \mathbb{R}^{1 \times r} $}

  \State $
      \mathbf{H}^{\mathrm{res}}_l
      \gets
      \algbluemath{\operatorname{Diag}
        \left(
        \sigma\left(
        \boldsymbol{a}^{\mathrm{res}}_l
        \cdot
        \left(
        \mathbf{W}^{\mathrm{res}}_l \mathbf{Z}_l
        \right)
        +
        \boldsymbol{b}^{\mathrm{res}}_l
        \right)
        \right)}  \text{ or }
      \algredmath{\mathbf{I}
      -
      \eta_l \cdot
      \left(
          \mathbf{\Lambda}_l
          +
          \left(\mathbf{H}^{\mathrm{in}}_l\right)^{\top}
          \mathbf{H}^{\mathrm{in}}_l
      \right)}$ \Comment{$\in \mathbb{R}^{r \times r} $}


  \State $\mathbf{Y}_{l+1}
      \gets
      \mathbf{H}^{\mathrm{res}}_l
      \mathbf{Y}_{l} +
      (\mathbf{H}^{\mathrm{post}}_l)^{\top} \cdot
      (\operatorname{RecurrentBlock}_{\theta}(\mathbf{H}^{\mathrm{pre}}_l
      \mathbf{Y}_l) + \boldsymbol{e}_l)$ \Comment{$\in \mathbb{R}^{r \times d} $}

\EndFor

\State $\mathbf{h}_{\mathrm{loop}}
  \gets
  {\mathbf{Y}_R}.\operatorname{Mean}(-2)$  \Comment{$\in \mathbb{R}^{r \times d} \to \mathbb{R}^{d}$}

\State $\mathbf{h}_{\mathrm{out}}
  \gets
  \operatorname{Coda}_{\theta}(\mathbf{h}_{\mathrm{loop}})$

\State \Return $\operatorname{LMHead}_{\theta}(\mathbf{h}_{\mathrm{out}})$

\end{algorithmic}
\end{algorithm}

\vspace{-0.2cm}
\paragraph{Implementation details.}
HyperLoop and OperLoop use loop-specific transition parameters. Their loop indices are omitted in the OperLoop expressions. Both retain HyperLoop's trainable output bias $\boldsymbol{e}_l$. In the implementation, it shifts the predicted target to $\boldsymbol{t}_l=\operatorname{Blocks}_{\theta}(\boldsymbol{o}_l)+\boldsymbol{e}_l$. The optimizer-derived formulas omit this term for brevity. This matches HyperLoop's bias setting when assessing performance gains from the optimizer-guided loop transition. Parcae uses shared transition parameters and initializes its recurrent state with Gaussian noise. All recurrent steps are fully backpropagated. Further details of the local optimization interpretation appear in Appendix~\ref{app:transition-details}.
\vspace{-0.2cm}
\paragraph{Causal step-size schedule.}
Let $\gamma_l$ denote the sigmoid factor in the step-size parameterization in §~\ref{sec:optimizer-guided-loop-design}. With $\eta_{-1}=1$, the causal recurrence gives $\eta_l=\gamma_l\eta_{l-1}=\prod_{j=0}^{l}\gamma_j$. Since $0<\gamma_l<1$ for finite sigmoid inputs, the step size decreases monotonically across recurrent depth. Here, ``causal'' refers to dependence on the previous loop's step size. The non-causal ablation uses $\eta_l=\gamma_l$, and the unit-step ablation fixes $\eta=1$. Both retain the delta objective in Table~\ref{tab:ablation-lr}.

\section{Model and Training Configuration}
\label{app:configuration}
\vspace{-0.2cm}
Table~\ref{tab:architecture-config} summarizes the architecture of the 18- and 32-layer baselines. Both use the same hidden width, attention configuration, and MoE design. Sliding-window and full attention follow a repeating 3:1 pattern. The first two layers are dense, and the remaining layers use MoE. Parameter estimates refer to trainable non-vocabulary parameters.

\begin{table}[t!]
\centering
\caption{Architecture configuration used in the experiments. The sliding--sliding--sliding--full pattern repeats every four layers.}
\label{tab:architecture-config}
\scriptsize
\setlength{\tabcolsep}{4pt}
\renewcommand{\arraystretch}{1.1}
\resizebox{\textwidth}{!}{%
\begin{tabular}{@{}lll@{}}
\toprule
\textbf{Component} & \textbf{Setting} & \textbf{Value} \\
\midrule
Model family & Architecture & Decoder-only Transformer \\
Numerics & Parameter / router / LM-head precision & BF16 / FP32 / FP32 \\
Representation & Hidden size / vocabulary size & $1024$ / $128{,}896$ \\
Attention & Heads / head dimension / GQA groups & $16$ / $128$ / $4$ \\
Context & Maximum sequence length & $4096$ \\
Attention pattern & Repeated every four layers & Sliding--Sliding--Sliding--Full \\
Sliding attention & Window / QK RoPE dimension / RoPE theta & $512$ / $128$ / $1{,}000$ \\
Full attention & QK RoPE dimension / RoPE theta & $64$ / $10{,}000$ \\
Feed-forward & Dense intermediate size / activation & $4096$ / SwiGLU \\
MoE & Type / experts per MoE layer / router top-$k$ & Marco-style / $256$ / $8$ \\
MoE experts & Routed / shared intermediate size & $384$ / $384$ \\
Routing & Router / load balancing & Sigmoid / auxiliary-loss-free \\
Routing & Routed scaling / router bias update rate / coefficient & $2.5$ / $5\times10^{-3}$ / $10^{-3}$ \\
Initialization & Standard deviation / LayerNorm epsilon & $0.006$ / $10^{-5}$ \\
Initialization & RMSNorm gamma initialization & Zero \\
\midrule
18-layer model & Layers / dense layers / MoE layers & $18$ / $0$--$1$ / $2$--$17$ \\
18-layer model & Full-attention layers / estimated parameters & $3,7,11,15$ / $4.974$B \\
32-layer model & Layers / dense layers / MoE layers & $32$ / $0$--$1$ / $2$--$31$ \\
32-layer model & Full-attention layers / estimated parameters & $3,7,11,15,19,23,27,31$ / $9.296$B \\
\bottomrule
\end{tabular}%
}
\end{table}

\vspace{-0.2cm}
\paragraph{Loop layouts and compute scaling.}
Table~\ref{tab:loop-layout} details the middle-loop configurations in Tables~\ref{tab:main-results} and~\ref{tab:ablation-loop-count}. Each layout contains $i$ Prelude blocks, $K$ shared blocks, and $j$ Coda blocks. Repeating the shared blocks for $R$ steps gives an effective depth of $i+RK+j$. The number of distinct Transformer blocks is $i+K+j$. Parameter and FLOP totals also depend on the block contents and transition overhead. Loop3 and Loop6 are trained separately and evaluated at their respective training depths. Their comparison therefore changes both training and inference computation.

\begin{table}[t!]
\centering
\caption{The number of distinct blocks is $i+K+j$, and the effective depth is $i+RK+j$.}
\label{tab:loop-layout}
\small
\renewcommand{\arraystretch}{1.1}
\resizebox{\linewidth}{!}{%
\begin{tabular}{lcccccc}
\toprule
Setting & Prelude $i$ & Shared $K$ & Loops $R$ & Coda $j$ & Effective Depth & Distinct Blocks \\
\midrule
18-layer comparison (Loop3) & 4 & 4 & 3 & 2 & 18 & 10 \\
32-layer comparison (Loop3) & 4 & 8 & 3 & 4 & 32 & 16 \\
Compute scaling (Loop6) & 4 & 4 & 6 & 2 & 30 & 10 \\
\bottomrule
\end{tabular}%
}
\end{table}
\vspace{-0.2cm}
\paragraph{Training settings.}
Table~\ref{tab:training-config} reports the training settings. Both scales use the same batch configuration, weight decay, gradient clipping, and cosine learning-rate schedule. Muon updates eligible QKV and FFN linear weights, while AdamW updates the remaining parameters. The peak learning rate, warmup duration, and number of updates vary by scale.
\vspace{-0.2cm}
\paragraph{Training budgets.}
Nominal token counts are calculated from the sequence length, global batch size, and number of updates. A global batch of $1024$ sequences of length $4096$ gives $4{,}194{,}304$ nominal token positions per update. The $67{,}000$ and $120{,}000$ updates correspond to $281.018$B and $503.316$B nominal tokens, respectively. The main text uses the approximate budget labels $300$B and $500$B. Comparisons at matched compute use training FLOPs, including transition overhead. Parameter ratios are normalized and rounded; they do not imply exact parameter equality.

\begin{table}[t!]
\centering
\caption{Training configuration. Nominal token counts are computed from the sequence length, global batch size, and number of updates.}
\label{tab:training-config}
\scriptsize
\setlength{\tabcolsep}{4pt}
\renewcommand{\arraystretch}{1.1}
\resizebox{\textwidth}{!}{%
\begin{tabular}{@{}lll@{}}
\toprule
\textbf{Category} & \textbf{Setting} & \textbf{Value} \\
\midrule
Objective & Training objective / loss & Causal LM / vocabulary-parallel cross-entropy \\
Batching & Global batch / micro batch & $1024$ / $8$ sequences \\
Batching & Nominal tokens per update & $4{,}194{,}304$ \\
Precision & Training parameter precision & BF16 \\
Optimizer & Muon momentum / Nesterov & $0.95$ / enabled \\
Optimizer & Newton--Schulz variant / iterations & Polar Express / $6$ \\
Optimizer & AdamW $\beta_1$ / $\beta_2$ / $\epsilon$ & $0.9$ / $0.95$ / $10^{-8}$ \\
Regularization & Weight decay / global gradient clipping & $0.1$ / $1.0$ \\
Schedule & Learning-rate schedule / minimum rate & Cosine / $10^{-5}$ \\
Schedule & Schedule unit & Iteration \\
\midrule
18-layer model & Peak rate / warmup / iterations & $7.3\times10^{-4}$ / $1{,}000$ / $67{,}000$ \\
18-layer model & Nominal training tokens / Muon-matched AdamW RMS & $281.018$B / $0.18$ \\
32-layer model & Peak rate / warmup / iterations & $5.6\times10^{-4}$ / $2{,}000$ / $120{,}000$ \\
32-layer model & Nominal training tokens / Muon-matched AdamW RMS & $503.316$B / $0.18$ \\
\bottomrule
\end{tabular}%
}
\end{table}

\section{Evaluation Details}
\label{app:evaluation}

We evaluate perplexity and downstream performance with \texttt{lm-eval-harness} v0.4.13~\citep{gao2023framework}. Each benchmark uses its task-specific default prompts, few-shot settings, answer filters, normalization rules, and generation parameters. Table~\ref{tab:main-results} summarizes the few-shot and chain-of-thought (CoT) settings. All compared models use the same evaluation configuration.

We report perplexity on Wikitext~\citep{merity2016pointer} and Lambada~\citep{radford2019language}. Commonsense scores cover Lambada)~\citep{radford2019language}, ARCc and ARCe~\citep{clark2018think}, HellaSwag~\citep{zellers2019hellaswag}, WinoGrande~\citep{sakaguchi2019winogrande}, and PIQA~\citep{bisk2019piqa}.

For GSM8K~\citep{cobbe2021training}, we report 5-shot evaluation and 8-shot evaluation with CoT. HumanEval~\citep{chen2021evaluating} and HumanEval+~\citep{liu2023evalplus} use 0-shot evaluation and report pass@1. MMLU~\citep{hendrycks2021measuring} uses 5-shot evaluation, with generated answers scored by exact match (EM). BBH~\citep{suzgun2022challenging} uses 3-shot evaluation with CoT.

Downstream scores are reported as percentages. LL Avg. is the unweighted mean of the six reported commonsense scores. EM Avg. is the unweighted mean of the six reported math, code, and reasoning scores, including code pass@1. The two GSM8K settings and the two HumanEval variants each contribute separate entries. EM \& LL Avg. is the mean of the two reported group averages, with equal weight assigned to each group. Accuracy differences and $\Delta$ in Table~\ref{tab:ablation-loop-count} are measured in percentage points. All compared runs use the same training and evaluation seed. Every result is evaluated from the final checkpoint of its run.

\section{Additional Details of the Optimization View}
\label{app:transition-details}

The following identities detail the local-gradient calculation in Proposition~\ref{proposition}. At recurrent step $l$, consider $\boldsymbol{o}_l=\mathbf{H}^{\mathrm{in}}_l\mathbf{Y}_l+\boldsymbol{b}_l$. For the local derivative with respect to $\mathbf{Y}_l$, hold $\mathbf{H}^{\mathrm{in}}_l$, $\boldsymbol{b}_l$, and the predicted target $\boldsymbol{t}_l$ fixed. The linear and delta objectives then give:
\begin{align}
\mathcal{L}^{\mathrm{lin}}_l&=-\langle\boldsymbol{o}_l,\boldsymbol{t}_l\rangle,
&\nabla_{\mathbf{Y}_l}\mathcal{L}^{\mathrm{lin}}_l&=-(\mathbf{H}^{\mathrm{in}}_l)^{\top}\boldsymbol{t}_l,\label{eq:app-linear-gradient}\\
\mathcal{L}^{\mathrm{delta}}_l&=\tfrac{1}{2}\|\boldsymbol{o}_l-\boldsymbol{t}_l\|_2^2,
&\nabla_{\mathbf{Y}_l}\mathcal{L}^{\mathrm{delta}}_l&=(\mathbf{H}^{\mathrm{in}}_l)^{\top}(\boldsymbol{o}_l-\boldsymbol{t}_l).\label{eq:app-delta-gradient}
\end{align}
Holding these quantities fixed is only a convention for deriving the closed-form forward transition. The target and state-dependent maps are recomputed across recurrent steps. End-to-end training does not detach the target, input map, or bias. Gradients propagate through their computation.

For the bias-free projection in Eq.~\ref{eq.pre.Y_l+1}, substitute $\boldsymbol{o}_l=\mathbf{H}^{\mathrm{in}}_l\mathbf{Y}_l$ into the delta update. This gives
\begin{align}
\mathbf{Y}_{l+1}
&=(\mathbf{I}-\eta_l\mathbf{\Lambda}_l)\mathbf{Y}_l
 +\eta_l(\mathbf{H}^{\mathrm{in}}_l)^{\top}(\boldsymbol{t}_l-\mathbf{H}^{\mathrm{in}}_l\mathbf{Y}_l)\notag\\
&=\bigl(\mathbf{I}-\eta_l\bigl(\mathbf{\Lambda}_l+(\mathbf{H}^{\mathrm{in}}_l)^{\top}\mathbf{H}^{\mathrm{in}}_l\bigr)\bigr) \mathbf{Y}_l
 +\eta_l(\mathbf{H}^{\mathrm{in}}_l)^{\top}\boldsymbol{t}_l.\label{eq:app-expanded-delta}
\end{align}
This recovers the residual-map form in Eq.~\ref{eq.after.Y_l+1}, with $\mathbf{H}^{\mathrm{pre}}_l=\mathbf{H}^{\mathrm{in}}_l$ and $\mathbf{H}^{\mathrm{post}}_l=\eta_l\mathbf{H}^{\mathrm{in}}_l$. The stored write map therefore includes the step-size factor. The HyperLoop alignment study equates the stored input and write maps under a unit step size. OperLoop uses the adaptive step size $\eta_l$.

Here, a closed-form transition specifies one update of the looped state. It does not denote a closed-form minimizer of the local objective. The local objective differs from the causal language-modeling loss used to train model parameters. The loop step size $\eta_l$ differs from the training learning rate. The decay operator $\mathbf{\Lambda}_l$ also differs from the training weight-decay coefficient in Table~\ref{tab:training-config}.





\end{document}